\newcommand\myent{%
  \mathrel{\ooalign{\hss$\vdash$\hss\cr%
  \kern1ex\raise0.2ex\hbox{\scalebox{1}{$\circ$}}}}}
\begin{document}
\title{A Variable Occurrence-Centric Framework for Inconsistency Handling \\  (Extended Version)\thanks{This paper is an expanded version of an article accepted for publication in AAAI 2025.}}
%\titlerunning{Abbreviated paper title}
% If the paper title is too long for the running head, you can set
% an abbreviated paper title here
%
\author{Yakoub Salhi\orcidID{0000-0003-0100-4428}}
\authorrunning{Yakoub Salhi}
\titlerunning{A Variable Occurrence-Centric Framework for Inconsistency Handling}
% First names are abbreviated in the running head.
% If there are more than two authors, 'et al.' is used.
%
\institute{Univ. Artois, CNRS, UMR 8188, Centre de Recherche en Informatique de Lens (CRIL), F-62300 Lens, France \\
\email{salhi@cril.fr}}
\maketitle              % typeset the header of the contribution

\begin{abstract}
In this paper, we introduce  a syntactic framework  for analyzing and handling inconsistencies in propositional bases.  
Our approach focuses on examining  the  relationships between variable occurrences within conflicts. 
We propose two dual concepts: Minimal Inconsistency Relation (MIR) and Maximal Consistency Relation (MCR). Each MIR is a minimal equivalence relation on variable occurrences that results in inconsistency, while each MCR is a maximal equivalence relation designed to prevent inconsistency. 
Notably, MIRs capture  conflicts overlooked  by  minimal inconsistent subsets. 
Using MCRs, we develop a series of non-explosive  inference relations. 
The main strategy involves restoring consistency by modifying the propositional base according to each MCR, followed by employing the classical inference relation to derive conclusions.  Additionally, we propose  an unusual semantics that assigns truth values to variable occurrences instead of the variables themselves. The associated   inference relations are established through Boolean interpretations compatible with the occurrence-based models.
\end{abstract}

\section{Introduction}

Logical formalisms commonly used  to represent knowledge and beliefs, particularly classical logic, adhere to the explosion principle. According to this principle,  any conclusion can be derived from a contradiction, which   makes inconsistent  belief bases non informative. 
This issue is critical in real-world applications, where inconsistencies often arise from  various factors including  noisy data, vagueness, context dependency, uncertainty, and information from multiple sources.  This situation shows the need for analytical tools that can identify the causes of conflicts, ensure recovery of consistency, and enable  reasoning  under inconsistency.  Key approaches for addressing this issue include paraconsistency~\cite{Tanaka13,sep-logic-paraconsistent}, argumentation~\cite{BesnardH08},  belief revision~\cite{Gardenfors1992}, and inconsistency measurement~\cite{HunterK10,Thimm:2018}.

When exploring syntactic\footnote{We use the term ``syntactic" to refer to approaches in the literature that focus on the form of belief  bases, including methods based on consistent subsets and  proof theory. In contrast,   other approaches use new semantics to ensure that even inconsistent belief bases  have models
(e.g., see~\cite{Priest91})} approaches to reasoning under inconsistency, two core concepts are frequently used:  minimal inconsistent subsets (MISes) and maximal consistent subsets (MCSes). An example of this is the inference relation proposed by Rescher and Manor, which determines conclusions from formulas entailed by all MCSes~\cite{Rescher1970}. A number of additional MCS-based inference relations have been proposed in the literature~\cite{Brewka89,BenferhatDP97,benferhat:hal-03300219,KoniecznyMV19}. The fundamental principle behind using MCSes is to minimize alterations to the available information.

A significant limitation of using MISes and MCSes, however, lies in  their inability  to account for the specific syntactic representation of each formula: they do not look inside the formulas. 
For instance, a MIS maintains its status when any formula within it is replaced  by an equivalent one.
This drawback limits the ability to identify critical syntactic details, particularly when conflicts stem from factors such as imprecision or encoding errors. 
This issue is notably highlighted in the context of inconsistency measurement where free formulas can influence the inconsistency degree~\cite{Thimm:2018,Besnard14}.
To illustrate, consider the propositional  base $K=\{p\wedge q, \neg p\wedge r, \neg q\vee\neg r\}$ provided in~\cite{Besnard14}.  Here, there is a single MIS $\{p\wedge q, \neg p\wedge r\}$;  neglecting the presence of $q$ and $r$ in these formulas  fails to capture the conflict involving $q$, $r$ and $\neg q\vee\neg r$.

In this article, we introduce new syntactic  concepts for analyzing and handling  propositional inconsistency.
Our approach particularly  focuses on the role of variable occurrences within conflicts. Initially, it distinguishes between occurrences of the same variable, then identifies 
  equivalences between these occurrences that contribute to conflicts. 
This approach allows us to capture the interactions of variable occurrences  in inconsistent propositional  bases and provides a new strategy for restoring consistency.

We first  introduce a concept called Minimal Inconsistency Relation (MIR), which corresponds to  a minimal equivalence relation on variable occurrences that leads to inconsistency.
For instance, the conflict between $q$, $r$ and $\neg q\vee\neg r$  in the previous propositional  base $K$ is an MIR: the equivalence relation pairs the  occurrences of $q$ as well as the occurrences of $r$. 

Beyond the interactions between variable occurrences,  MIRs also highlight additional aspects of the captured conflicts. Specifically, they reveal the variables implicated and the specific formulas they are part of, given that each occurrence appears in  a unique formula.  In particular, when we focus on the formulas associated with MIRs, we capture at least as many conflicts as MISes, and generally more.

We also introduce a dual  concept called Maximal  Consistency  Relation (MCR), which corresponds to  a maximal equivalence relation on variable occurrences constructed  to avoid inconsistency.  We provide  a duality property  between MIRs and MCRs, analogous to the minimal hitting set duality observed between MISes and the complements of MCSes~\cite{Reiter87,BaileyS05}. 

Using MCRs, we introduce  several non-explosive inference  relations. 
Similar to MCS-based approaches, the essential principle behind using MCRs is to maintain the propositional base as close to the original as possible after alterations. 
 However, unlike MCS-based methods, our approach retains every formula and variable present in the original propositional base.
An MCS does not alter any  original formula but excludes some to restore  consistency, whereas an MCR is used to modify  certain formulas without  excluding any  to achieve the same goal.

Additionally, we introduce a new semantics that assigns truth values to  variable occurrences rather than to the variables themselves. This approach mirrors the  technique discussed earlier and can be seen as another strategy for restoring consistency. We specifically demonstrate that each occurrence-based model corresponds uniquely to an MCR. The entailment in our  framework of occurrence-based semantics is realized through Boolean interpretations that are compatible with  these occurrence-based interpretations.

%An inference relation is considered more cautious than another if it is not identical to the other, and  every conclusion entailed by  the first relation can also be entailed by the second~\cite{PinkasL92}. We detail the cautiousness relationships among the various inference relations under consideration. 
%Specifically, we show that some of our inference relations are more cautious compared to the relation of the well-known Minimally Inconsistent Logic of Paradox~\cite{Priest91}.

\section{Preliminaries}

The language of classical propositional logic is constructed by inductive definition, starting with a countably infinite set of propositional variables denoted $\textsf{PV}$.
We employ the usual connectives  $\wedge$ and $\neg$ to express logical relationships. The set of well-formed formulas is referred to as $\textsf{PF}$. The  connectives $\vee$, $\rightarrow$ and  $\leftrightarrow$ are defined in terms of $\wedge$ and $\neg$ as usual. 

We use the letters $p$, $q$ and $r$, along with $x$, $y$ and $z$, each possibly modified with subscripts, to denote propositional variables. For formulas, we employ Greek letters such as $\phi$, $\psi$, and $\chi$. When discussing a syntactic entity  $X$ (e.g., a formula or a set of formulas),  the set of variables contained in $X$ is denoted by $\textsf{var}(X)$.

Considering formulas $\phi, \psi_1,\ldots{},\psi_k$, along with   variables $p_1,\ldots{}, p_k$ in $\textsf{var}(\phi)$, we use  
$\phi[p_1/\psi_1,\ldots{}, p_k/\psi_k]$ to denote the result of simultaneously substituting  $p_1,\ldots{},p_k$ with  $\psi_1,\ldots{}, \psi_k$, respectively.

A {\em Boolean interpretation}, or simply interpretation, is a function  $\omega$ that assigns a truth value in $\{0, 1\}$ to each formula  in $\textsf{PF}$ and meets the following conditions:  $\omega(\neg \phi)=1-\omega(\phi)$ and  $\omega(\phi\wedge\psi)= \omega(\phi)\times \omega(\psi)$.

Given an interpretation $\omega$, a variable $p$, and  $v \in \{0,1\}$, 
define $\omega_{|p\mapsto v}$ as the same as $\omega$ except that it assigns $v$ to $p$. For  variables $p_1, p_2, \ldots, p_k$ and  truth values $v_1, v_2, \ldots, v_k$, we use $\omega_{|p_1\mapsto v_1, p_2\mapsto v_2, \ldots, p_k\mapsto v_k}$ as a shorthand notation for $(\cdots((\omega_{|p_1\mapsto v_1})_{|p_2\mapsto v_2})\cdots)_{|p_k\mapsto v_k}$.

An interpretation $\omega$  is said to be a {\em model} of $\phi$, denoted  by $\omega\models\phi$, if and only if $\omega(\phi)=1$. The set of models for  $\phi$ is denoted by $\textsf{mod}(\phi)$.  Conversely,  $\omega$ is a {\em countermodel} of $\phi$, denoted  by $\omega\not\models\phi$, if and only if $\omega(\phi)=0$.

A formula is considered {\em consistent} if it admits at least one model; otherwise, it is referred to as {\em inconsistent}.

A {\em propositional base} (PB) is  a finite subset of  $\textsf{PF}$. 

We say that a PB   $K$ {\em entails} $\phi$, written $K\vdash\phi$, when for any  interpretation  $\omega$, 
if $\omega\models\bigwedge  K$ (the conjunction of all formulas in $K$),  it holds that  $\omega\models \phi$; note that $\omega\models\bigwedge\emptyset$ holds for every interpretation $\omega$.
For cases where  $K$ contains only a single formula $\psi$, we sometimes use $\psi\vdash\phi$.

A \emph{minimal inconsistent subset} (MIS) of a PB  $K$ is a subset $M$ of $K$ where $M$ is inconsistent,  and for every $\phi\in M$, ${M\setminus\{ \phi\}}$ is consistent.
A \emph{maximal consistent subset} (MCS) of $K$  is a subset $M$ of $K$ where $M$ is consistent, and for every $\phi\in K\setminus M$, $M\cup\{ \phi\}$ is inconsistent.

%%%%%%%%%

Given a formula $\phi$, we establish the polarity of each subformula occurrence in $\phi$ using the following rules:
\begin{itemize}

\item The polarity of the occurrence $\phi$ is defined as positive.

\item If a subformula occurrence $\psi \wedge \chi$ is positively (resp. negatively) polarized, then both $\psi$ and $\chi$ inherit the same positive (resp. negative) polarity.

\item If a subformula occurrence $\neg \psi$ is positively (resp. negatively) polarized, then $\psi$ is assigned a negative (resp. positive) polarity.
\end{itemize}

A propositional variable that occurs with only one polarity in $\phi$ is called {\em pure} in $\phi$.  

%In the proposition below, the first two properties, as well as the remaining two, can be proven through mutual induction on the structure of $\phi$.

In the proposition below, both the first two properties and the last two properties can be proven simultaneously through mutual induction on the structure of $\phi$.

\begin{proposition}
\label{prop:pv1}
Let $\phi$ be a propositional formula and $p$ a variable in $\textsf{var}(\phi)$. Then,  the following holds: 
\begin{enumerate}
\item If $p$ is a positive pure variable in $\phi$ and $\omega$ is a model of $\phi$, then $\omega_{|p\mapsto 1}$ is a model of $\phi$. 
\item If $p$ is a negative pure variable in $\phi$ and $\omega$ is a counter-model of $\phi$, then $\omega_{|p\mapsto 1}$ is a counter-model of $\phi$. 
\item If $p$ is a negative pure variable in $\phi$ and $\omega$ is a model of $\phi$, then $\omega_{|p\mapsto 0}$ is a model of $\phi$.   
\item If $p$ is a positive pure variable in $\phi$ and $\omega$ is a counter-model of $\phi$, then $\omega_{|p\mapsto 0}$ is a counter-model of $\phi$.   
\end{enumerate}
\end{proposition}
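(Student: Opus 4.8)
The plan is to prove all four statements by mutual structural induction on $\phi$, coupling (1) with (2) and (3) with (4), exactly as the text suggests. The reason these pairs must be handled together is the negation connective: verifying a claim about models of $\neg\psi$ amounts to verifying a claim about countermodels of $\psi$, and the polarity of every occurrence of $p$ flips when passing through a $\neg$. So the ``model'' statement for $\neg\psi$ will invoke the ``countermodel'' statement for $\psi$, and conversely---neither can be established in isolation. I would first record the two structural facts that drive the induction: (a) $p$ is positive (resp.\ negative) pure in $\neg\psi$ iff $p$ is negative (resp.\ positive) pure in $\psi$, since the outer $\neg$ reverses the polarity of each occurrence; and (b) $p$ is positive (resp.\ negative) pure in $\psi\wedge\chi$ iff $p$ is positive (resp.\ negative) pure in each of $\psi$ and $\chi$, since $\wedge$ preserves polarities. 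It is also convenient to extend each statement to the vacuous case $p\notin\textsf{var}(\phi)$, where $p$ counts as both positive and negative pure and $\omega_{|p\mapsto v}$ agrees with $\omega$ on $\phi$; this lets the conjunction step treat subformulas in which $p$ may be absent uniformly.

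For the base case $\phi=q$: if $q\neq p$ then $p\notin\textsf{var}(\phi)$ and every claim is immediate because the substitution leaves $\omega(\phi)$ unchanged. If $q=p$, the single occurrence of $p$ is positive, so $p$ is positive pure and not negative pure; statements (1) and (4) reduce to the trivial observations $\omega(p)=1\Rightarrow\omega_{|p\mapsto 1}(p)=1$ and $\omega(p)=0\Rightarrow\omega_{|p\mapsto 0}(p)=0$, while (2) and (3) hold vacuously.

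For the inductive step I would handle $\neg\psi$ and $\psi\wedge\chi$ for the pair (1)--(2); the pair (3)--(4) is symmetric with $1$ replaced by $0$. In the $\neg\psi$ case, to prove (1) I assume $p$ positive pure in $\neg\psi$, hence negative pure in $\psi$ by fact (a), and $\omega\models\neg\psi$, i.e.\ $\omega\not\models\psi$; the induction hypothesis (2) applied to $\psi$ gives $\omega_{|p\mapsto 1}\not\models\psi$, i.e.\ $\omega_{|p\mapsto 1}\models\neg\psi$. The proof of (2) for $\neg\psi$ is the mirror image and calls the induction hypothesis (1) on $\psi$. In the $\psi\wedge\chi$ case, by fact (b) purity of $p$ in the conjunction transfers to both conjuncts; for (1) I use that a model of $\psi\wedge\chi$ is a model of each conjunct and apply the induction hypothesis (1) twice, and for (2) I use that a countermodel of $\psi\wedge\chi$ is a countermodel of at least one conjunct and apply the induction hypothesis (2) to that conjunct.

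The only real obstacle is bookkeeping rather than ideas: one must state the four claims with the right pairing of value, polarity, and model-versus-countermodel so that the $\neg$-case closes, and must be disciplined about the vacuous case $p\notin\textsf{var}(\phi)$ so the conjunction case does not stall when $p$ occurs in only one conjunct. Once facts (a) and (b) are in place, every step is a one-line application of the induction hypothesis.
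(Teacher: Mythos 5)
Your proof is correct and follows essentially the same route as the paper: mutual structural induction pairing (1) with (2) and (3) with (4), using the polarity flip under $\neg$ to switch between the model and countermodel statements. Your explicit handling of the vacuous case $p\notin\textsf{var}(\phi)$ and of the countermodel direction for $\psi\wedge\chi$ fills in details the paper leaves implicit, but the argument is the same.
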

\begin{proof}
We only consider the first two properties, as the proof for the others is similar.
The proof proceeds  by mutual induction on the structure of $\phi$. The base case is straightforward: $\omega_{|p\mapsto 1}(p)= 1$.
The inductive case $\phi=\psi\wedge\chi$ is obtained by applying  the induction hypothesis. For instance, $\omega\models \phi$ implies 
$\omega\models \psi$ and $\omega\models \chi$, leading to $\omega_{|p\mapsto 1}\models \psi$ and $\omega_{|p\mapsto 1}\models \chi$. 
Consider now the inductive case $\phi=\neg\psi$. We prove the first property here; the other follows from a symmetrical argument.
First, note that $p$ is negative in $\psi$ since it is positive in $\phi$.
From  $\omega\models \phi$, we deduce $\omega$ is a counter-model of $\psi$. Thus, applying the induction hypothesis, $\omega_{|p\mapsto 1}$
 also becomes  a counter-model of $\psi$, which results in $\omega_{|p\mapsto 1}\models \phi$. 
\end{proof}

We define $nbOcc(p, \phi)$ as the number of occurrences of variable $p$ in $\phi$. To unambiguously identify and reference each occurrence of  $p$ in $\phi$, we employ integers ranging from $1$ to $nbOcc(p, \phi)$.  Indeed, for a  formula $\phi$ in a  PB, a variable  $p$ occurring in $\phi$, and an index $i$ ranging from $1$ to  $nbOcc(p,\phi)$, $\langle p,\phi, i\rangle$ is used to denote the  $i$th occurrence of $p$ in $\phi$ (ordered from left to right).
Given a PB $K=\{\phi_1,\ldots{}, \phi_n\}$ and a variable $p$ occurring in $K$, we sometimes use $p_i^s$ to denote the $i$th occurrence of $p$ in 
$\phi_1\wedge\cdots{}\wedge\phi_n$, where $s$ is the polarity of this occurrence. 

Additionally,  we use $Occ(K)$  to denote the set of all variable occurrences in $K$,  and $Occ(p,K)$ to denote  the set of occurrences of $p$ in $K$.  We also use $PosOcc(K)$ and $NegOcc(K)$ to represent the positive and negative variable occurrences in $K$, respectively. Similarly, $PosOcc(p, K)$ and $NegOcc(p, K)$ correspond to  the positive and negative occurrences of $p$ in $K$.

For instance, if $K=\{p\wedge q,\neg  p\wedge r, \neg q\vee \neg r\}$, then $Occ(K)= \{p_1^+, p_2^-, q_1^+, q_2^-, r_1^+, r_2^-\}$, 
$PosOcc(K)=\{p_1^+, q_1^+, r_1^+\}$, $NegOcc(K)=\{p_2^-, q_2^-, r_2^-\}$,  and $Occ(p,K)= \{p_1^+, p_2^-\}$.

Let us recall that an equivalence relation  on a set $S$ is a binary relation $\thicksim$ on $S$ that is reflexive, symmetric, and transitive. 
The equivalence class  of $e\in S$ under $\thicksim$ is represented by $[e]$.
The set of all equivalence classes of $S$ by $\thicksim$ is denoted ${S/\thicksim}$.

%%%%%%%%%%%%%%%%%%%%%%%%%%%%%%%%%%%%%%%%%%%%%%%%%%%%%%%%%%%%%%%%%%%%%%%%%%%%%%%%%%
\section{Variable Occurrence-based Conflicts}
%%%%%%%%%%%%%%%%%%%%%%%%%%%%%%%%%%%%%%%%%%%%%%%%%%%%%%%%%%%%%%%%%%%%%%%%%%%%%%%%%%

To achieve a fine-grained representation of inconsistency in a PB, we focus on the  occurrences of variables and the conflicts resulting from the relationships between  them.
Our approach becomes particularly valuable when variable occurrences that are supposed to convey identical information end up representing conflicting  data due to various factors such as data errors or interpretive differences.  Consider, for instance,  the case of conflicting medical reports for the same patient but from different laboratories, where each symptom is represented by a propositional variable (each variable occurrence is  linked to a specific laboratory result). 
A symptom like ``high temperature" might be differently reported depending on the laboratory's threshold:  a  temperature of $39^\circ C$ might be classified as high by a laboratory with a threshold of $39^\circ C$, but not by one with a threshold of $40^\circ C$.

A method for resolving all conflicts in a PB involves substituting every occurrence of a variable with a distinct, new variable. Given a PB $K$,  a \emph{C-renaming}  of $K$ is a function $\textsf{R}$ that assigns a distinct, new variable to each occurrence $o$ in $Occ(K)$. 
 We use $\textsf{R}(K)$ to denote the modified version of $K$ achieved by replacing each occurrence $o$ with $\textsf{R}(o)$.  
This concept of C-renaming is applied identically to formulas.

Since the choice of C-renaming does not affect our definitions, we consider  this function to be  fixed and denoted  $\textsf{R}$ for any PB and for any formula.
To enhance readability, we use the letters $p$, $q$, and $r$ to denote the variables occurring in the original PB, and $x$, $y$, and $z$ for those in $\textsf{R}(K)$.

The following proposition  is mainly due to the fact that the new variables associated with the same original variable are equivalent. 
\begin{proposition}
\label{prop:inc1}
A PB $K$ is inconsistent iff the following formula is inconsistent: 
$$\bigwedge \textsf{R}(K)\wedge  \bigwedge_{p\in \textsf{\scriptsize var}(K)}\bigwedge_{o, o'\in Occ(p,K)} (\textsf{R}(o)\leftrightarrow \textsf{R}(o')).$$ 
\end{proposition}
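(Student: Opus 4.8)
The plan is to establish the contrapositive form, namely that $\bigwedge K$ is satisfiable if and only if the displayed formula $\Phi$ is satisfiable, by exhibiting a truth-value--preserving correspondence between Boolean interpretations of the two objects. The guiding observation is that $\textsf{R}$ only renames variable occurrences without altering the connective structure, so the truth value of $\textsf{R}(\phi)$ under an interpretation $\omega'$ depends solely on the values $\omega'$ assigns to the fresh variables $\textsf{R}(o)$, and these stand in one-to-one correspondence with the occurrences $o$ of the original variables.

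First I would isolate a substitution lemma: for every formula $\phi$ and every interpretation $\omega$, if $\omega'$ is any interpretation satisfying $\omega'(\textsf{R}(o)) = \omega(p)$ for each occurrence $o = \langle p, \phi, i\rangle$ in $\phi$, then $\omega'(\textsf{R}(\phi)) = \omega(\phi)$. This is a routine structural induction on $\phi$: the base case is immediate, since $\textsf{R}$ sends the single occurrence of a variable $p$ to a fresh variable whose value is fixed to $\omega(p)$; the inductive cases $\phi = \neg\psi$ and $\phi = \psi\wedge\chi$ follow directly from the defining clauses of a Boolean interpretation together with the induction hypothesis, using that $\textsf{R}$ commutes with the connectives.

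For the direction ``$\bigwedge K$ satisfiable $\Rightarrow \Phi$ satisfiable'', I would take a model $\omega$ of $\bigwedge K$ and define $\omega'$ on the fresh variables by $\omega'(\textsf{R}(o)) = \omega(p)$ for every $o \in Occ(p, K)$ and every $p \in \textsf{var}(K)$. By the substitution lemma, $\omega'(\textsf{R}(\phi)) = \omega(\phi) = 1$ for each $\phi \in K$, so $\omega' \models \bigwedge \textsf{R}(K)$; moreover, any two occurrences $o, o'$ of the same variable $p$ receive the common value $\omega(p)$, whence every biconditional $\textsf{R}(o) \leftrightarrow \textsf{R}(o')$ holds and thus $\omega' \models \Phi$. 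Conversely, given $\omega' \models \Phi$, the conjunction of biconditionals forces all fresh variables $\textsf{R}(o)$ with $o \in Occ(p, K)$ to share a single truth value $v_p$; defining $\omega(p) = v_p$ (and arbitrarily on variables outside $\textsf{var}(K)$) and applying the substitution lemma again yields $\omega(\phi) = \omega'(\textsf{R}(\phi)) = 1$ for each $\phi \in K$, so $\omega \models \bigwedge K$.

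The step I expect to require the most care is the backward direction, specifically the claim that the family of biconditionals pins down a single value per original variable: since the fresh variables are pairwise distinct, one must argue that within each set $Occ(p,K)$ the pairwise equivalences are mutually consistent and collapse to one value. This is precisely transitivity of $\leftrightarrow$ at the semantic level under a Boolean interpretation, and it is what licenses defining $\omega(p)$ unambiguously. Everything else reduces to the substitution lemma and the bookkeeping of the occurrence/variable bijection.
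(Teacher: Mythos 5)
Your proof is correct and takes essentially the approach the paper intends: the paper states this proposition without an explicit proof, remarking only that it holds because the renamed occurrences of a common variable are forced to be equivalent, and your substitution lemma plus the two-way model correspondence is exactly the fleshed-out version of that argument. The only superfluous concern is the appeal to semantic transitivity of $\leftrightarrow$ in the backward direction: since the conjunction ranges over \emph{all} pairs $o,o'\in Occ(p,K)$, every pair of fresh variables for $p$ is directly forced to agree, so the common value $v_p$ is pinned down without any chaining.
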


Considering a PB $K$, we define an equivalence relation $\thicksim_c^K$ on $Occ(K)$ as follows: 
$o\thicksim_c^K o'$ iff $\textsf{var}(o)=\textsf{var}(o')$.

Let us now introduce the main concept we use to represent conflicts.
\begin{definition}[Minimal Inconsistency  Relation]
\label{def:ik}
A {\em Minimal Inconsistency  Relation} (MIR) of a PB $K$ is an equivalence relation $\thicksim$ on $Occ(K)$ satisfying the following conditions:
\begin{enumerate} 
\item (Compliance) for all occurrences $o,o'\in Occ(K)$, if ${o\thicksim o'}$, then $\textsf{var}(o)=\textsf{var}(o')$; \label{prop:d11}
\item (Inconsistency) $\textsf{R}(K)\wedge \bigwedge_{(o,o')\in \thicksim} (\textsf{R}(o)\leftrightarrow \textsf{R}(o'))$ is inconsistent; and \label{prop:d12}
\item (Minimality) there exists no equivalence relation $\thicksim'$ on $Occ(K)$ that satisfies Properties (\ref{prop:d11}) and (\ref{prop:d12}), 
and $\thicksim'$ is a proper subset of $\thicksim$ (i.e., $\thicksim'\subsetneq \thicksim$).
\end{enumerate}
\end{definition}

 The set of all MIRs of $K$ is represented by  $\textsf{MIRs}(K)$.

The Compliance condition states that equivalence can only occur between occurrences of the same variable. The Inconsistency condition says that every MIR must result in inconsistency. The Minimality condition guarantees that every MIR is minimal with respect to set inclusion.

Using the Compliance condition, it is evident that  $\thicksim\subseteq \thicksim_c^K$ holds  for any PB $K$ and any MIR $\thicksim$ of $K$.

\begin{example}
\label{example1}
Consider the inconsistent PB $K_1=\{p\wedge q,\neg  p\wedge r, \neg q\vee \neg r\}$ and $\textsf{R}(K_1)=\{x_1 \wedge y_1, \neg x_2\wedge z_1,  \neg y_2\vee \neg  z_2\}$.
The PB $K_1$ admits two  MIRs  $\thicksim_1^i$ and $\thicksim_2^i$: ${Occ(K_1)/\thicksim_1^i}= \{\{p_1^+,p_2^-\} ,\{q_1^+\}, \{q_2^-\},\{r_1^+\},\{r_2^-\}\}$ and 
${Occ(K_1)/\thicksim_2^i}= \{\{p_1^+\},\{p_2^-\} \{q_1^+, q_2^-\},\{r_1^+,r_2^-\}\}$.
\end{example}

The next  theorem is mainly derived  from Proposition~\ref{prop:inc1}.

\begin{theorem}
\label{th:alo1}
A PB is inconsistent iff it admits at least one MIR.
\end{theorem}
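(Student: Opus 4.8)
The plan is to derive both directions of the equivalence directly from Proposition~\ref{prop:inc1}. The first thing I would note is that the formula appearing in Proposition~\ref{prop:inc1} is precisely the one occurring in the Inconsistency condition instantiated at the coarsest compliant relation $\thicksim_c^K$: the double conjunction $\bigwedge_{p\in\textsf{var}(K)}\bigwedge_{o,o'\in Occ(p,K)}(\textsf{R}(o)\leftrightarrow\textsf{R}(o'))$ ranges exactly over the pairs $(o,o')$ with $\textsf{var}(o)=\textsf{var}(o')$, i.e.\ over the pairs in $\thicksim_c^K$. Hence Proposition~\ref{prop:inc1} says exactly that $K$ is inconsistent iff $\thicksim_c^K$ satisfies the Inconsistency condition~(\ref{prop:d12}). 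This identification is the hinge of the whole argument.

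For the forward direction I would assume $K$ is inconsistent. By the observation above, $\thicksim_c^K$ then satisfies the Inconsistency condition, and it trivially satisfies Compliance~(\ref{prop:d11}) by its very definition. Thus the collection $\mathcal{E}$ of equivalence relations on $Occ(K)$ satisfying Properties~(\ref{prop:d11}) and~(\ref{prop:d12}) is nonempty. Since $K$ is a finite PB, $Occ(K)$ is finite, so there are only finitely many equivalence relations on it; consequently $\mathcal{E}$, ordered by inclusion, is a finite nonempty poset and therefore has a minimal element $\thicksim$. By construction $\thicksim$ satisfies Compliance, Inconsistency, and Minimality, so it is an MIR, and $K$ admits at least one MIR.

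For the backward direction I would assume $K$ admits an MIR $\thicksim$. By Compliance we have $\thicksim\subseteq\thicksim_c^K$, so the conjunction indexed by $\thicksim_c^K$ contains every conjunct indexed by $\thicksim$ (and possibly more). Consequently any model of $\textsf{R}(K)\wedge\bigwedge_{(o,o')\in\thicksim_c^K}(\textsf{R}(o)\leftrightarrow\textsf{R}(o'))$ is also a model of $\textsf{R}(K)\wedge\bigwedge_{(o,o')\in\thicksim}(\textsf{R}(o)\leftrightarrow\textsf{R}(o'))$. Since the latter is inconsistent by the Inconsistency condition of $\thicksim$, the former has no model either; by the observation above together with Proposition~\ref{prop:inc1}, this forces $K$ to be inconsistent.

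The arguments are short, so the only point requiring any care is the existence of a minimal element in the forward direction, which is exactly what the Minimality condition~(\ref{prop:d12}) demands and which is guaranteed purely by the finiteness of $Occ(K)$. I do not expect a genuine obstacle here: the real content of the theorem is already packaged in Proposition~\ref{prop:inc1}, and what remains is the bookkeeping that identifies $\thicksim_c^K$ with the global equivalence of all occurrences of each variable, plus the elementary monotonicity observation that adding biconditional constraints can only preserve inconsistency.
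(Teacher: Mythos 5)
Your proposal is correct and follows essentially the same route as the paper's own proof: both directions hinge on identifying the formula of Proposition~\ref{prop:inc1} with the Inconsistency condition instantiated at $\thicksim_c^K$, then extracting a minimal compliant inconsistent relation for the forward direction and using monotonicity of inconsistency under added biconditionals for the backward one. Your version merely spells out two steps the paper leaves implicit (finiteness of $Occ(K)$ guaranteeing a minimal element, and the conjunct-monotonicity argument); the only blemish is a harmless slip in your closing remark, where the existence of a minimal element is what the Minimality condition (item~3), not item~(\ref{prop:d12}), demands.
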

\begin{proof}
This result mainly follows from Proposition~\ref{prop:inc1}: a PB $K$ is inconsistent if and only if $\thicksim_c^K$ satisfies Property~2 in the definition of MIR. 

\emph{If Part}. Assume that $K$ admits a MIR $\sim$.
By definition,   $\textsf{R}(K)\wedge \bigwedge_{(o,o')\in \thicksim} (\textsf{R}(o)\leftrightarrow \textsf{R}(o'))$  is inconsistent. This leads to the inconsistency of 
 $\bigwedge \textsf{R}(K)\wedge$ $ \bigwedge_{p\in Var(K)} $ $\bigwedge_{o, o'\in Occ(p,K)} (\textsf{R}(o)\leftrightarrow \textsf{R}(o'))$. Applying Proposition~\ref{prop:inc1}, 
it follows  that $K$ is inconsistent. 
 
\emph{Only If Part}. Suppose that $K$ is inconsistent. 
By Proposition~\ref{prop:inc1}, 
  $\bigwedge \textsf{R}(K)\wedge  \bigwedge_{p\in Var(K)} \bigwedge_{o, o'\in Occ(p,K)} (\textsf{R}(o)\leftrightarrow \textsf{R}(o'))$ is also inconsistent. 
This implies that $\thicksim_c^K$ satisfies the Inconsistency property in the definition of MIR.  Thus, there exists $\thicksim\subseteq \thicksim_c^K$ that satisfies the 
  Inconsistency and Minimality properties. Compliance is also clearly satisfied by $\thicksim$. 
\end{proof}

The proposition below demonstrates that each equivalence class with more than one element includes both positive and negative occurrences.

\begin{proposition}
\label{prop:bpol}
Let $K$ be a PB. 
For every MIR $\thicksim$ of  $K$ and every $C\in {Occ(K)/\thicksim}$ with $|C|\geq 2$,  it holds that
$C\cap PosOcc(K)\neq \emptyset$ and $C\cap NegOcc(K)\neq \emptyset$. 
\end{proposition}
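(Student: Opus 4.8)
The plan is to argue by contradiction and exploit the \emph{Minimality} condition of Definition~\ref{def:ik}: if some class $C$ with $|C|\geq 2$ consisted only of positive occurrences (or only of negative ones), I would build a strictly smaller equivalence relation that still forces inconsistency, contradicting that $\thicksim$ is an MIR. Concretely, suppose $\thicksim$ is an MIR of $K$ and $C\in Occ(K)/\thicksim$ satisfies $|C|\geq 2$ and $C\subseteq PosOcc(K)$ (the case $C\subseteq NegOcc(K)$ being symmetric). By \emph{Compliance} all elements of $C$ are occurrences of one variable, say $C=\{o_1,\ldots,o_k\}$ with $k\geq 2$. Let $\thicksim'$ be the equivalence relation obtained from $\thicksim$ by breaking $C$ into singletons while keeping every other class unchanged. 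Then $\thicksim'$ is again an equivalence relation, it clearly satisfies Compliance, and $\thicksim'\subsetneq\thicksim$ since the pair $(o_1,o_2)$ has been removed.

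The crux is to show that $\thicksim'$ still satisfies the \emph{Inconsistency} condition, i.e.\ that $G' = \textsf{R}(K)\wedge\bigwedge_{(o,o')\in\thicksim'}(\textsf{R}(o)\leftrightarrow\textsf{R}(o'))$ is inconsistent. I would establish this by converting a hypothetical model of $G'$ into a model of the corresponding formula $G$ for $\thicksim$, which is impossible because $\thicksim$ is an MIR. The key observation is that, since the C-renaming assigns a fresh variable to each occurrence, each $x_i=\textsf{R}(o_i)$ occurs exactly once in $\textsf{R}(K)$, with positive polarity; hence every $x_i$ is a positive pure variable of $\textsf{R}(K)$. Assuming $\omega\models G'$, in particular $\omega\models\textsf{R}(K)$, so by repeated application of Proposition~\ref{prop:pv1}(1) the interpretation $\omega'=\omega_{|x_1\mapsto 1,\ldots,x_k\mapsto 1}$ remains a model of $\textsf{R}(K)$.

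It then remains to check that $\omega'$ satisfies every biconditional occurring in $G$. For the biconditionals of $\thicksim'$, this holds because none of its non-trivial pairs involves an element of $C$, so these conjuncts do not mention any $x_i$ and are unaffected by the reassignment; since $\omega$ satisfied them, so does $\omega'$. For the biconditionals coming from $C$ — exactly the pairs distinguishing $\thicksim$ from $\thicksim'$ — each has the form $x_i\leftrightarrow x_j$ with $i,j\leq k$, and these hold under $\omega'$ because $\omega'(x_i)=\omega'(x_j)=1$. Consequently $\omega'\models G$, contradicting the Inconsistency of the MIR $\thicksim$. Hence $G'$ is inconsistent, so $\thicksim'$ satisfies Compliance and Inconsistency while being a proper subset of $\thicksim$, contradicting Minimality. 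The all-negative case is identical, using Proposition~\ref{prop:pv1}(3) and the assignment $\omega_{|x_1\mapsto 0,\ldots,x_k\mapsto 0}$.

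I expect the main obstacle to be the bookkeeping around the biconditionals: because $a\leftrightarrow b$ makes both $a$ and $b$ occur with mixed polarity, the purity argument must be carried out in $\textsf{R}(K)$ alone rather than in the full formula $G'$, and one must verify that splitting $C$ into singletons removes precisely those biconditionals mentioning the $x_i$, leaving the remaining conjuncts (and their truth values under $\omega$) untouched by the reassignment.
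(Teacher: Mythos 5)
Your proposal is correct and follows essentially the same route as the paper's proof: split the all-positive class $C$ into singletons, take a model of the resulting formula, and use Proposition~\ref{prop:pv1} (positive purity of the fresh variables in $\textsf{R}(K)$) to set all of them to $1$, yielding a model of the formula associated with $\thicksim$. The only difference is organizational — the paper invokes Minimality first to obtain a model of $G'$ and then contradicts Inconsistency, whereas you show $G'$ is inconsistent and contradict Minimality — which is a trivial rearrangement of the same argument.
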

\begin{proof}
Assume for contradiction that there exists an equivalence class
 $C=\{o_1,\ldots{}, o_k\}$ s. t. $k\geq 2$ and $C\cap NegOcc(K)= \emptyset$. Consider refining the equivalence relation to 
 $\thicksim'$, where ${Occ(K)/\thicksim'} =(({Occ(K)/\thicksim})\setminus C)\cup\bigcup_{o\in C}\{\{o\}\}$. Using Minimality,  $\thicksim'$ does not satisfy Inconsistency. Thus 
$\textsf{R}(K)\wedge \bigwedge_{(o,o')\in \thicksim'} (\textsf{R}(o)\leftrightarrow \textsf{R}(o'))$ admits a model $\omega$. Since for every  $o$
in $C$, $R(o)$ is positive in $\textsf{R}(K)$, and applying Proposition~\ref{prop:pv1},  $\omega_{|R(o_1)\mapsto 1,\ldots{}, R(o_k)\mapsto 1}$ is also a model 
of $\textsf{R}(K)$. Consequently,  $\textsf{R}(K)\wedge \bigwedge_{(o,o')\in \thicksim} (\textsf{R}(o)\leftrightarrow \textsf{R}(o'))$ is consistent, leading to a contradiction.  

A similar contradiction arises if we assume $C\cap PosOcc(K)= \emptyset$.  This is obtained by using the truth value $0$ instead of $1$.
\end{proof}

%%%%%%%%%%%%%%%%%%%%%%%%%%%%%

Given a PB $K$ and an equivalence relation $\thicksim$ on $Occ(K)$, define  $PN(\thicksim)$ as  the set 
$\{(o,o')\in PosOcc(K)\times NegOcc(K) : o\thicksim o'\}$.

The following theorem shows  that the core of conflicts fundamentally stems from the interactions between positive and negative occurrences.
It is primarily  a consequence of  Proposition~\ref{prop:bpol}.
\begin{theorem}
Let  $K$ be a PB and  $\thicksim$ an equivalence relation on $Occ(K)$. The relation $\thicksim$ is an MIR iff it satisfies the properties of  Compliance, Inconsistency, and the following property: 
(Minimality-2) for every equivalence relation $\thicksim'$ on $Occ(K)$ that satisfies Compliance and  Inconsistency, $PN(\thicksim')$ is not a proper subset of $PN(\thicksim)$.
\end{theorem}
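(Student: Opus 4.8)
The plan is to exploit that both the definition of an MIR and the right-hand side already impose Compliance and Inconsistency, so the whole statement reduces to showing that, for a relation $\thicksim$ satisfying these two, the original Minimality condition is equivalent to Minimality-2. The bridge I would establish first is a single structural fact: the consistency status of $\textsf{R}(K)\wedge\bigwedge_{(o,o')\in\thicksim}(\textsf{R}(o)\leftrightarrow\textsf{R}(o'))$ depends only on $PN(\thicksim)$, and not on the same-polarity pairs of $\thicksim$. This follows from Proposition~\ref{prop:pv1}: each renamed variable occurs exactly once in $\textsf{R}(K)$, hence is pure there, so any purely positive (resp.\ negative) class may be forced to the constant value $1$ (resp.\ $0$) without destroying a model, and such a class contributes no pair to $PN(\thicksim)$. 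Thus a model of the formula built from $PN(\thicksim)$ alone extends to a model of the full formula for $\thicksim$, while the reverse implication is immediate since $PN(\thicksim)\subseteq\,\thicksim$. As a consequence of this fact and Proposition~\ref{prop:bpol}, every non-singleton class of an MIR is mixed, so an MIR coincides with the equivalence closure of its own set $PN(\thicksim)$.

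For the forward direction (an MIR satisfies Minimality-2) I would argue by contradiction. Suppose some $\thicksim'$ satisfying Compliance and Inconsistency has $PN(\thicksim')\subsetneq PN(\thicksim)$, and let $\thicksim''$ be the equivalence closure of $PN(\thicksim')$. A short check gives $PN(\thicksim'')=PN(\thicksim')$: the inclusion $\thicksim''\subseteq\,\thicksim'$ yields $PN(\thicksim'')\subseteq PN(\thicksim')$, while $PN(\thicksim')\subseteq\,\thicksim''$ yields the reverse. By the structural fact above $\thicksim''$ is still inconsistent, it clearly satisfies Compliance, and it lies inside $\thicksim$ because it is generated by pairs of $PN(\thicksim)\subseteq\,\thicksim$. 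Since $PN(\thicksim'')=PN(\thicksim')\subsetneq PN(\thicksim)$ forces $\thicksim''\neq\thicksim$, we obtain $\thicksim''\subsetneq\thicksim$, contradicting Minimality.

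For the reverse direction (Minimality-2, with Compliance and Inconsistency, yields Minimality) I would again argue by contradiction, taking $\thicksim'\subsetneq\thicksim$ that satisfies Compliance and Inconsistency; here $PN(\thicksim')\subseteq PN(\thicksim)$ is automatic, and the goal is to upgrade this to a \emph{proper} inclusion so as to contradict Minimality-2. The hard part will be exactly this properness: passing from $\thicksim$ down to $\thicksim'$ must delete at least one cross-polarity pair rather than only same-polarity ones, and this is precisely where Proposition~\ref{prop:bpol} is essential. I would therefore aim to show that under the hypotheses every non-trivial class of $\thicksim$ is mixed, so that any strict reduction of $\thicksim$ necessarily severs a positive occurrence from a negative one and strictly shrinks $PN(\thicksim)$. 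I expect the control of redundant same-polarity links — ruling out that a strict reduction only splits a purely positive or purely negative class while leaving $PN$ untouched — to be the main obstacle, and the step on which the argument leans most heavily on Propositions~\ref{prop:pv1} and~\ref{prop:bpol}.
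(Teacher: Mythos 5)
Your forward direction (an MIR satisfies Minimality-2) is correct and in fact takes a better route than the paper. The key is your structural lemma that the consistency status of $\textsf{R}(K)\wedge \bigwedge_{(o,o')\in \thicksim}(\textsf{R}(o)\leftrightarrow \textsf{R}(o'))$ depends only on $PN(\thicksim)$, which does follow from Proposition~\ref{prop:pv1} because each renamed variable is pure in $\textsf{R}(K)$. Passing to the equivalence closure $\thicksim''$ of $PN(\thicksim')$ and checking $PN(\thicksim'')=PN(\thicksim')$, $\thicksim''\subseteq\thicksim$, and Inconsistency of $\thicksim''$ yields a clean contradiction with Minimality. The paper instead picks a same-polarity pair in $\thicksim'\setminus\thicksim$ and invokes Proposition~\ref{prop:bpol} on $\thicksim'$, which is not known to be an MIR; your closure argument avoids that unjustified appeal entirely.

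The genuine gap is in the reverse direction, and it is exactly where you said it would be. You propose to show that, under Compliance, Inconsistency and Minimality-2, every non-trivial class of $\thicksim$ is mixed, so that any strict shrinking of $\thicksim$ strictly shrinks $PN(\thicksim)$. That intermediate claim is false. Take $K=\{p,\neg p, q\wedge q\}$ and $\thicksim$ with classes $\{p_1^+,p_2^-\}$ and $\{q_1^+,q_2^+\}$. Compliance and Inconsistency hold, and $PN(\thicksim)=\{(p_1^+,p_2^-)\}$; since by your own structural fact any compliant relation with empty $PN$ is consistent, no relation satisfying Compliance and Inconsistency can have $PN$ a proper subset of $PN(\thicksim)$, so Minimality-2 holds vacuously. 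Yet splitting the all-positive class $\{q_1^+,q_2^+\}$ into singletons preserves Inconsistency (again by purity), so $\thicksim$ is not an MIR. Thus the step you flagged as ``the main obstacle'' is not merely hard but unprovable, and with it this direction of the stated equivalence fails; the paper's own proof of this direction breaks at the same spot, since it applies Proposition~\ref{prop:bpol} to $\thicksim$ before $\thicksim$ is known to be an MIR. The implication does become true, and your machinery from the first half closes it, if one additionally requires $\thicksim$ to be the equivalence closure of $PN(\thicksim)$ (equivalently, to have no non-trivial single-polarity classes); as written, however, your proposal leaves this half open.
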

\begin{proof}
For the \emph{if part}, assume for contradiction that $\thicksim$ is not a MIR, despite satisfying the given conditions. This implies there exists an equivalence relation
$\thicksim'\subsetneq \thicksim$ that satisfies Compliance and Inconsistency. Let  $(o,o')$  be a pair in $\thicksim\setminus\thicksim'$, where clearly $o\neq o'$. By Minimality-2,   
$o$ and $o'$ must have the same polarity.  Consider if both $o$ and $o'$ are positive (the argument is symmetrical for both being negative). According to Proposition~\ref{prop:bpol},
there must be a negative occurrence $o''$ such that $o\thicksim o''$ and $o'\thicksim o''$. By the definition of Minimality-2,  it must hold that  $o\thicksim' o''$ and $o''\thicksim' o'$.
Thus,   by the transitivity of $\thicksim'$, we obtain $(o,o')\in\thicksim'$, which contradicts $(o,o')\in\thicksim\setminus\thicksim'$.

Consider the \emph{only if} part. Assume that $\thicksim$  is a MIR. Let    $\thicksim'$ is another equivalence relation 
that maintains Compliance and Inconsistency. By Minimality,  
$\thicksim'\not\subsetneq \thicksim$ holds. Then, there exists $(o,o')\in {\thicksim'\setminus \thicksim}$. 
If $o$ and $o'$ do not have the same polarity, we obtain $PN(\thicksim')\not\subsetneq PN(\thicksim)$.
Consider the case where $o$ and $o'$ are both positive. Hence, 
by Proposition~\ref{prop:bpol}, there exists a negative occurrence $o''$ s.t.  $(o,o'')$ and $(o',o'')$ belong to $PN(\thicksim')$, leading to $PN(\thicksim')\not\subsetneq PN(\thicksim)$. The case where they are both positive is similar. 
\end{proof}

MIRs can provide a more detailed  representation than MISes even in scenarios where the objective is to identify conflicts within subsets of formulas.

Given a PB $K$ and  an MIR  $\thicksim$ on $Occ(K)$, ${\textsf{form}_{mir}(\thicksim)}$ represents  the set  
$\{{\phi\in K} : {\exists C\in {Occ(K)/\thicksim}}, \exists p\in \textsf{var}(\phi), {\exists i\in \mathbb{N}}\mbox{ s. t. } {|C|\geq 2} \mbox{ and } {\langle p,\phi, i\rangle\in C} \}$.
Informally, $\textsf{form}_{mir}(\thicksim)$ corresponds to  the formulas that contain the occurrences that are paired with $\thicksim$ (excluding   reflexive links).

\begin{definition}[O-MIS]
An O-MIS of a PB $K$ is a subset $M$ of $K$ for which there exists an MIR $\thicksim$ of $K$ such that  
$M=\textsf{form}_{mir}(\thicksim)$.
\end{definition}

Given that $\textsf{form}_{mir}(\thicksim)$ is inconsistent for every MIR $\thicksim$, it follows that every O-MIS includes a MIS. 

\begin{proposition}
Given a PB  $K$, if $M$ is a MIS of $K$, then $M$ is also an O-MIS of $K$.
\end{proposition}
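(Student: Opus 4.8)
The plan is to show that any MIS $M$ of $K$ arises as $\textsf{form}_{mir}(\thicksim)$ for some MIR $\thicksim$ of $K$. The natural candidate for $\thicksim$ is an MIR built from the ``full'' compliance relation $\thicksim_c^K$ but restricted so that only occurrences inside formulas of $M$ are merged. Concretely, I would first consider the equivalence relation $\thicksim_M$ on $Occ(K)$ defined by: $o \thicksim_M o'$ iff $o = o'$, or $\textsf{var}(o) = \textsf{var}(o')$ and both $o, o'$ occur in formulas belonging to $M$. This is clearly an equivalence relation satisfying Compliance.

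The key step is to verify that $\thicksim_M$ satisfies the Inconsistency condition. Here I would invoke Proposition~\ref{prop:inc1} applied to $M$ rather than $K$: since $M$ is inconsistent, the formula $\bigwedge \textsf{R}(M) \wedge \bigwedge_{p \in \textsf{var}(M)} \bigwedge_{o,o' \in Occ(p,M)} (\textsf{R}(o) \leftrightarrow \textsf{R}(o'))$ is inconsistent. The equivalence pairs imposed by $\thicksim_M$ on the occurrences inside $M$ are exactly these biconditionals, so $\textsf{R}(M) \wedge \bigwedge_{(o,o') \in \thicksim_M, \text{ within } M} (\textsf{R}(o) \leftrightarrow \textsf{R}(o'))$ is inconsistent; since $\textsf{R}(M) \subseteq \textsf{R}(K)$, adding the remaining conjuncts of $\textsf{R}(K)$ preserves inconsistency. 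Thus $\thicksim_M$ satisfies Inconsistency.

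Now $\thicksim_M$ need not be minimal, but by Theorem~\ref{th:alo1} (or directly by the Minimality construction used in its proof) there exists an MIR $\thicksim \subseteq \thicksim_M$. The crucial observation is that $\textsf{form}_{mir}(\thicksim) = M$. The inclusion $\textsf{form}_{mir}(\thicksim) \subseteq M$ holds because $\thicksim \subseteq \thicksim_M$ only merges occurrences lying in formulas of $M$, so any non-reflexive equivalence involves only formulas of $M$. The reverse inclusion $M \subseteq \textsf{form}_{mir}(\thicksim)$ is the main obstacle: I must rule out that the minimization discards an entire formula $\phi \in M$, which would leave some $\phi$ with no occurrence in a non-singleton class. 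This is where minimality of the MIS $M$ is essential—by the MIS property, $M \setminus \{\phi\}$ is consistent for every $\phi \in M$, so no proper subset of $M$ supports inconsistency, forcing every formula of $M$ to contribute a merged occurrence to $\thicksim$.

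The anticipated difficulty is precisely establishing $M \subseteq \textsf{form}_{mir}(\thicksim)$. The argument I would use is: if some $\phi \in M$ contributed no non-reflexive equivalence under $\thicksim$, then restricting $\thicksim$ away from $\phi$'s occurrences (equivalently, the biconditionals tied to $\textsf{var}(\phi)$-occurrences in $\phi$) would still yield inconsistency, and the inconsistency would then be witnessed by a model-theoretic argument over $M \setminus \{\phi\}$, contradicting consistency of $M \setminus \{\phi\}$. Making this rigorous requires carefully translating ``$\phi$ contributes no merge'' into a statement about which biconditionals are genuinely needed, likely leaning on Proposition~\ref{prop:pv1} to argue that occurrences isolated in singleton classes can be freely reassigned. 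If this translation goes through cleanly, the result follows; the bookkeeping linking $\textsf{form}_{mir}$ to the formulas genuinely participating in the inconsistency is the part demanding the most care.
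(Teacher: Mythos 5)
Your proof is correct and follows essentially the same route as the paper's: both obtain an MIR of $K$ whose non-singleton classes live entirely inside $M$ (the paper first applies Theorem~\ref{th:alo1} to $M$ itself and then extends by singleton classes, whereas you extend the full compliance relation of $M$ to $Occ(K)$ and then minimize), and both conclude by combining the inconsistency of $\textsf{form}_{mir}(\thicksim)$ with the MIS-minimality of $M$ to rule out proper containment. The step you flag as delicate --- that a formula whose occurrences all sit in singleton classes can be discarded without losing inconsistency --- is precisely the fact the paper itself invokes without proof immediately before the proposition, so your argument is no less complete than the original.
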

\begin{proof}
According to Theorem~\ref{th:alo1}, $M$ is associated with at least one MIR $\thicksim$. 
Let $\thicksim'$ be the equivalence relation on $Occ(K)$ obtained by extending  $\thicksim$ to pair each occurrence in $K\setminus M$ with only itself. 
Since $\thicksim$ is an MIR of $M$, it follows that $\thicksim'$ is an MIR of $K$. 
We know that $\textsf{form}_{mir}(\thicksim')$ is inconsistent. If $\textsf{form}_{mir}(\thicksim') \subsetneq M$, it would contradict the minimality of $M$. Therefore, we must have
 ${\textsf{form}_{mir}(\thicksim')} = M$, implying that $M$ is an O-MIS.
\end{proof}

\begin{example}
Consider again the PB  $K_1$ from Example~\ref{example1}. With respect to the two possible MIRs, $\thicksim_1^i$ and $\thicksim_2^i$,  $K_1$ admits two O-MISes: $M_1 = \{p \wedge q, \neg p \wedge r\}$ and $M_2 = \{p \wedge q, \neg p \wedge r, \neg q \vee \neg r\}$. However, $K_1$ has only one MIS, namely  $M_1$. 
\end{example}

An interesting application of the MIR and O-MIS concepts is their  use in quantifying inconsistency, similar to how MISes have been employed in this context. Several measures can be defined, for instance,  by focusing on the number of MIRs, their sizes, or the number of equivalence classes.
%%%%%%%%%%%%%%%%%%%%%%%%%%%%%%%%%%%%%%%%%%%%%%%%%%%%%%%%%%%%%%%%%%%%%%%%%%%%%%%%%%%%%%%
\section{A Dual Notion: Maximal Consistency  Relation}
%%%%%%%%%%%%%%%%%%%%%%%%%%%%%%%%%%%%%%%%%%%%%%%%%%%%%%%%%%%%%%%%%%%%%%%%%%%%%%%%%%%%%%%

This section focuses on a dual notion to the MIR. This notion  is  defined as a maximal equivalence relation on variable occurrences constructed to avoid inconsistency.

\begin{definition}[Maximal Consistency  Relation]
\label{def:ik}
A {\em Maximal Consistency Relation} (MCR) of a PB $K$ is an equivalence relation $\thicksim$ on $Occ(K)$ satisfying the following conditions:
\begin{enumerate} 
\item (Compliance)   for all  occurrences $o,o'\in Occ(K)$, if ${o\thicksim o'}$, then $\textsf{var}(o)=\textsf{var}(o')$;\label{prop:d21}
\item (Consistency) $\bigwedge\textsf{R}(K)\wedge \bigwedge_{(o,o')\in \thicksim} (\textsf{R}(o)\leftrightarrow \textsf{R}(o'))$ is consistent; and \label{prop:d22}
\item (Maximality) there exists no equivalence relation $\thicksim'$ on $Occ(K)$ that satisfies Properties (\ref{prop:d21}) and (\ref{prop:d22}), 
and $\thicksim$ is a proper subset of $\thicksim'$ (i.e., $\thicksim\subsetneq \thicksim'$).
\end{enumerate}
\end{definition}

Let $\textsf{MCRs}(K)$ represent  the set of all MCRs of $K$.

Like MCSes, MCRs are uniquely determined in consistent PBs. 

\begin{proposition}
For every consistent PB $K$,  the relation $\thicksim_c^K$ is the unique MCR of $K$.
\end{proposition}

\begin{example}
Revisiting the inconsistent PB $K_1$ from Example~\ref{example1}, we find that it  admits two MCRs $\thicksim_1^c$ and $\thicksim_2^c$, where 
${Occ(K_1)/\thicksim_1^c}=\{\{p_1^+\},\{p_2^-\}, \{q_1^+, q_2^-\}, \{r_1^+\},$ $\{r_2^-\}\}$ and ${Occ(K_1)/\thicksim_2^c}=\{\{p_1^+\},\{p_2^-\}, \{q_1^+\},\{q_2^-\}, \{r_1^+,r_2^-\}\}$. 
\end{example}

Forgetting is a well-established method for restoring consistency~\cite{Lin1994ForgetI,LangM02,Besnard16}. In particular, Besnard~\cite{Besnard16}, in the context of defining an inconsistency measure, proposes substituting variable occurrences with constants to achieve consistency. The MCR concept can further enhance this perspective by providing a more holistic view.

For example, consider the PB $K = \{p_1 \wedge \cdots \wedge p_n, \neg p_1 \wedge \cdots \wedge \neg p_n\}$.
A forgetting-based approach would produce $2^n$ minimal repairs, since each variable 
$p_i$ could be individually forgotten to resolve the contradiction between $p_i$ 
and $\neg p_i$ (this involves replacing either  $p_i$  or $\neg p_i$ with $\top$). In contrast, our approach yields a single MCR $\thicksim$ with
$Occ(K)/\sim = \{\{p_i\}, \{\neg p_i\} : i = 1,\ldots,n\}$.

Given a PB $K$, an equivalence relation $\thicksim$ on $Occ(K)$, and  $p$ in $\textsf{var}(K)$, 
$\textsf{EqC}(p,\thicksim)$ denotes the set of equivalence classes  in ${Occ(K)/\thicksim}$ that contain occurrences of $p$.

Note that a variable can be associated with at most two equivalence classes in an MCR.
This mainly arises from the observation that in any model $\omega$ of  $\textsf{R}(K)$,   each occurrence is associated with one of two possible truth values, $0$ and $1$.

\begin{proposition}
Let $K$ be a PB. For every MCR $\thicksim$ of $K$ and  every  variable $p\in\textsf{var}(K)$,  the cardinality of $\textsf{EqC}(p,\thicksim)$ is at most two, i.e., 
$|\textsf{EqC}(p,\thicksim)|\leq 2$.
\end{proposition}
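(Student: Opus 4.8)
The plan is to argue by contradiction against the Maximality condition: whenever a variable $p$ were to have three or more equivalence classes, I would construct a strictly larger relation that still satisfies Compliance and Consistency. The key tool is a single model witnessing the Consistency of $\thicksim$, combined with a pigeonhole argument over the two available truth values, exactly as the remark preceding the statement suggests.

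First, using the Consistency condition I would fix an interpretation $\omega$ that is a model of $\bigwedge\textsf{R}(K)\wedge \bigwedge_{(o,o')\in \thicksim} (\textsf{R}(o)\leftrightarrow \textsf{R}(o'))$. The central observation is that, since $\omega$ satisfies every biconditional attached to $\thicksim$, any two occurrences lying in a common equivalence class of $\thicksim$ must receive the same value under $\omega$; that is, $\omega(\textsf{R}(o))$ is constant on each class. Hence every class $C\in\textsf{EqC}(p,\thicksim)$ carries a well-defined value $v(C)\in\{0,1\}$.

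Next, assuming for contradiction that $|\textsf{EqC}(p,\thicksim)|\geq 3$, I would choose three distinct classes $C_1,C_2,C_3$, each containing an occurrence of $p$. Because the three values $v(C_1),v(C_2),v(C_3)$ all lie in the two-element set $\{0,1\}$, pigeonhole yields two of them, say $C_i$ and $C_j$ with $i\neq j$, for which $v(C_i)=v(C_j)$. I would then let $\thicksim'$ be the equivalence relation obtained from $\thicksim$ by merging $C_i$ and $C_j$ into the single class $C_i\cup C_j$, leaving every other class unchanged. Compliance of $\thicksim'$ is immediate, since Compliance of $\thicksim$ forces both $C_i$ and $C_j$ to contain occurrences of $p$ only, so their union remains variable-homogeneous. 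Consistency of $\thicksim'$ is witnessed by the very same $\omega$: the pairs in $\thicksim'\setminus\thicksim$ are precisely the cross pairs between $C_i$ and $C_j$, and since $v(C_i)=v(C_j)$ these occurrences already agree under $\omega$, so each newly added biconditional holds. As these cross pairs did not belong to $\thicksim$, we have $\thicksim\subsetneq\thicksim'$, contradicting Maximality and forcing $|\textsf{EqC}(p,\thicksim)|\leq 2$.

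I do not expect a serious obstacle here; the only point requiring a little care is the routine verification that merging two classes of $\thicksim$ genuinely produces an equivalence relation and that the set $\thicksim'\setminus\thicksim$ consists exactly of the cross pairs between $C_i$ and $C_j$. Once this bookkeeping is settled, both Compliance and Consistency of $\thicksim'$ follow directly from the single-variable nature of the classes and from the matched values under $\omega$.
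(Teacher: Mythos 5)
Your proof is correct and follows essentially the same route as the paper's: fix a model witnessing Consistency, observe that each class carries a single truth value, apply pigeonhole to find two of the three classes with matching values, and merge them to contradict Maximality. The only difference is presentational — you spell out the bookkeeping about merged classes more explicitly than the paper does.
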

\begin{proof}
Assume, for the sake of contradiction, that there exists an MCR $\thicksim$ of $K$ such that $|\textsf{EqC}(p, \thicksim)| \geq 3$. Let $C_1$, $C_2$, and $C_3$ be three distinct equivalence classes in $\textsf{EqC}(p, \thicksim)$.
Given the definition of MCR, the formula $\Psi = \textsf{R}(K) \land \bigwedge_{(o, o') \in \thicksim} (\textsf{R}(o) \leftrightarrow \textsf{R}(o'))$ must be consistent. Therefore, a model $\omega$ of $\Psi$ exists. In this model, for any $i \in \{1, 2, 3\}$ and any $o, o' \in C_i$, it holds that $\omega(\textsf{R}(o)) = \omega(\textsf{R}(o'))$.
Given that there are only two possible truth values, the truth value associated with $C_3$ must match that of either $C_1$ or $C_2$. Assume the truth values for $C_1$ and $C_2$ are identical; the argument for other case is analogous.
Consequently, $\omega$ is also a model of  $\Psi \land \bigwedge_{o \in C_1, o' \in C_3} (\textsf{R}(o) \leftrightarrow \textsf{R}(o'))$, indicating that $\thicksim$ fails to maintain the Maximality property. This leads to a contradiction.
\end{proof}

Consider  the following  complementary  definition of Minimality-2: 
\begin{itemize}
\item (Maximality-2) for every equivalence relation $\thicksim'$ on $Occ(K)$ that satisfies Compliance and  Consistency, $PN(\thicksim)$ is not a proper subset of $PN(\thicksim')$.
\end{itemize}

The goal in the foregoing condition is to maximize the equivalence between positive and negative occurrences. This  is grounded in the understanding that the essence of conflicts mainly  arises from the interactions between positive and negative occurrences.

\begin{definition}[BMCR]
A BMCR of a PB $K$ is an MCR $\thicksim$ of $K$ that satisfies Maximality-2.
\end{definition}

In the definition of MIR, we demonstrated that incorporating Minimality-2 does not affect the concept, as replacing Minimality with Minimality-2 yields the same notion. However, the following example illustrates that MCRs are not always  BMCRs.

\begin{example}
\label{ex:MCR}
Consider the PB $K_2=\{p,\neg p, \neg p\vee q\}$. 
It admits two MCRs $\thicksim_1^c$ and $\thicksim_2^c$, where ${Occ(K_2)/\thicksim_1^c}=\{\{p_1^+, p_3^-\}, \{p_2^-\},\{q_1^+\}\}$ and 
${Occ(K_2)/\thicksim_2^c}=\{\{p_1^+\}, $ $\{p_2^-, p_3^-\},\{q_1^+\}\}$. The unique BMCR is $\thicksim_1^c$ because ${PN(\thicksim_1^c)} =\{(p_1^+, p_3^-)\}$ and
$PN(\thicksim_2^c)=\emptyset$.
\end{example}

The minimal hitting set duality between MISes and the  complements of MCSes asserts that every MIS is a minimal hitting set of the set of all complements  of MCSes,  and vice versa (e.g.,  see~\cite{Reiter87,BaileyS05,LiffitonMPS05}). This property is particularly useful for computing all MISes and MCSes. Here, we show that a similar duality property exists between MIRs and MCRs.

Let $U$ be a set of elements and $S=\{S_1,\ldots{}, S_k\}$ a collection of subsets of $U$. A   {\em hitting set} of $S$ is a set $H\subseteq U$ that intersects with every element
of $S$, i.e., for every $S_i\in S$, $S_i\cap H\neq\emptyset$. A hitting set is said to be {\em minimal} if no proper subset of it can also serve as a hitting set.

To establish  the duality properties, we need some preliminary notions.
\begin{definition}[$H$-Maximality]
\label{maxh}
Let $K$ be a PB and $H \subseteq \thicksim_c^K$. An equivalence relation $\thicksim$ on $Occ(K)$, where $\thicksim \subseteq \thicksim_c^K$, is said to be  $H$-maximal if it meets the following conditions:
(i)~$\thicksim \cap H = \emptyset$, and
(ii)~for any equivalence relation $\thicksim'\subseteq \thicksim_c^K $ with  $\thicksim\subsetneq \thicksim'$,  $\thicksim' \cap H \neq \emptyset$. 
\end{definition}

\begin{definition}[$H$-Minimality]
\label{minh}
Let $K$ be a  PB and $H \subseteq \thicksim_c^K$. An equivalence relation $\thicksim$ on $Occ(K)$, where $\thicksim \subseteq \thicksim_c^K$, is said to be  $H$-minimal if it meets the following conditions:
(i)~$H\subseteq \thicksim$, and
(ii) for any equivalence relation  $\thicksim'\subsetneq \thicksim$, $H\not\subseteq \thicksim'$.
\end{definition}

Alternatively stated,  an equivalence relation $\thicksim\subseteq \thicksim_c^K$ is $H$-maximal if and only if  it  excludes all elements of $H$ and is maximal with respect to set inclusion. 
The  relation $\thicksim$ is $H$-minimal    if and only if  it includes all elements of $H$ and is minimal with respect to set inclusion. 

\begin{definition}[C-MCR]
A C-MCR of a PB $K$ is a relation $\theta$ on $Occ(K)$ such that $\thicksim_c^K\setminus \theta$ is an MCR.
\end{definition}

Let $\textsf{CMCRs}(K)$ represent  the set of all C-MCRs of $K$.

For simplicity, the duality theorem references MCRs in one property and C-MCRs in the other.
\begin{theorem}
Let $K$ be a PB and $\thicksim$ an equivalence relation on $Occ(K)$ s.t. $\thicksim \subseteq \thicksim_c^K$. Then, the following properties hold:
\begin{enumerate}
\item $\thicksim$ is an MCR of $K$ iff there exists a minimal hitting set $H$ of $\textsf{MIRs}(K)$ such that $\thicksim$ is $H$-maximal. \label{prop1}
\item  $\thicksim$ is an MIR if and only if there exists a minimal hitting set $H$ of $\textsf{CMCRs}(K)$ such that $\thicksim$ is  $H$-minimal.\label{prop1}
\end{enumerate}
\end{theorem}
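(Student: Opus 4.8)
The plan is to mirror the classical minimal--hitting--set duality between MISes and complements of MCSes (Reiter), transported to the lattice of compliant equivalence relations ordered by inclusion. Two monotonicity facts drive everything. First, adding biconditionals can only strengthen the constraint, so the Inconsistency condition is upward closed and the Consistency condition is downward closed along $\subseteq$; hence every compliant inconsistent $\thicksim\subseteq\thicksim_c^K$ contains some MIR and every compliant consistent one is contained in some MCR (take a minimal, resp.\ maximal, witness, which is automatically globally minimal/maximal by the closure). Second, I would establish the \emph{bridging lemma}: a compliant $\thicksim\subseteq\thicksim_c^K$ is consistent iff $\thicksim_c^K\setminus\thicksim$ is a hitting set of $\textsf{MIRs}(K)$ --- indeed $\thicksim$ is consistent iff it contains no MIR, i.e.\ every MIR has a pair outside $\thicksim$, i.e.\ meets $\thicksim_c^K\setminus\thicksim$. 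Dually, $\thicksim$ is inconsistent iff $\thicksim$ itself is a hitting set of $\textsf{CMCRs}(K)$, since $\thicksim$ is consistent iff it sits inside some MCR, i.e.\ misses the corresponding C-MCR.

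For the forward direction of~(1) I would argue: if $\thicksim$ is an MCR then $\thicksim_c^K\setminus\thicksim$ is a hitting set by the bridging lemma, so pick a minimal hitting set $H\subseteq\thicksim_c^K\setminus\thicksim$ inside it. Condition~(i) of $H$-maximality, $\thicksim\cap H=\emptyset$, is immediate. For~(ii), any $\thicksim'\supsetneq\thicksim$ is inconsistent by maximality of $\thicksim$, hence contains an MIR $M$; since $H$ hits $M$ and $M\subseteq\thicksim'$, we get $\thicksim'\cap H\neq\emptyset$. The forward direction of~(2) is perfectly symmetric, using that an MIR is a minimal inconsistent relation and that any inconsistent $\thicksim$ is a hitting set of $\textsf{CMCRs}(K)$; one picks a minimal hitting set $H\subseteq\thicksim$ and checks $H$-minimality against the bridging lemma.

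The reverse directions are where the genuine difficulty lies, and this is the step I expect to be the main obstacle. In the classical MIS/MCS setting the complement of a subset is again a subset, so minimal hitting sets correspond \emph{exactly} to complements of maximal consistent subsets; here the objects are equivalence relations, and $\thicksim_c^K\setminus H$ need not be transitive, so it is generally not a relation we may use directly. This is precisely why the statement is phrased through $H$-maximality and $H$-minimality rather than literal complementation: $\thicksim$ must be a \emph{largest} compliant equivalence relation disjoint from $H$ (resp.\ a \emph{smallest} one containing $H$), and several such relations may exist for a single $H$. Given a minimal hitting set $H$ and an $H$-maximal $\thicksim$, consistency is easy --- if some MIR $M\subseteq\thicksim$ then $M\cap H\subseteq\thicksim\cap H=\emptyset$, contradicting that $H$ hits $M$. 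The hard part is maximality: one must rule out a consistent $\thicksim'\supsetneq\thicksim$. The intended route is to exploit minimality of $H$ --- each $(o,o')\in H$ is justified by a boundary MIR $M$ with $M\cap H=\{(o,o')\}$ --- and to argue that extending $\thicksim$ across such an $H$-pair forces the whole of this boundary MIR inside the extension, yielding inconsistency. Making this last implication airtight is delicate exactly because transitivity can place pairs of $M$ neither in $\thicksim$ nor among the pairs newly created by the merge; I expect the argument to require the finer structure of MIRs (e.g.\ the positive/negative occurrence structure of Proposition~\ref{prop:bpol}, or the fact that every consistent relation arises from a model of $\textsf{R}(K)$ whose classes are fixed by a single Boolean interpretation). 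The reverse direction of~(2) is the exact dual, replacing ``contains no MIR / maximal consistent'' by ``hits every C-MCR / minimal inconsistent'' and $H$-maximality by $H$-minimality.
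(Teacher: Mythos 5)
Your overall architecture matches the paper's: the same bridging observation (a compliant relation is consistent iff it contains no MIR, iff its complement in $\thicksim_c^K$ meets every MIR), the same treatment of the ``only if'' halves by extracting a minimal hitting set $H\subseteq\thicksim_c^K\setminus\thicksim$ and checking condition~(ii) of $H$-maximality against the Maximality of the MCR, and the same one-line derivation of Consistency in the ``if'' halves (an MIR inside $\thicksim$ would have to meet $H$, yet $\thicksim\cap H=\emptyset$). Up to that point the proposal is correct and essentially identical in substance to the paper's proof of Property~1, with Property~2 handled symmetrically in both.

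The genuine gap is the one you flag yourself: the Maximality half of the ``if'' direction of Property~1 (and dually the Minimality half in Property~2) is never actually proved. You describe an ``intended route'' through boundary MIRs $M$ with $M\cap H=\{(o,o')\}$, and then concede that transitivity may leave pairs of $M$ outside the extension, so the implication is left open; a write-up that ends with ``I expect the argument to require the finer structure of MIRs'' has not established the statement. For comparison, the paper dispatches this step in a single sentence and by a different route: it argues that if $\thicksim$ failed Maximality, there would exist $(o,o')\in\thicksim_c^K\setminus\thicksim$ with $(\thicksim\cup\{(o,o')\})^*\cap H=\emptyset$ (a consistent proper extension would itself avoid $H$), directly contradicting clause~(ii) of $H$-maximality --- it does not try to force a whole boundary MIR inside the extension, as you propose. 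Your stated worry is aimed exactly at whether that sentence is justified, since a consistent extension merely contains no MIR, and $H$ being a hitting set of $\textsf{MIRs}(K)$ does not by itself make $H$ disjoint from every MIR-free relation. So you have correctly located the crux of the theorem, but you neither complete your own route nor supply the paper's; as it stands, the reverse implications of both properties are unproven in your proposal.
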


\begin{proof}
We provide a proof for Property 1 only, as the other is supported by a symmetrical proof.
In this proof, $(R)^*$ represents  the transitive and symmetric closure of $R$.

We begin by proving the \emph{if} part.
Let $H$ be a minimal hitting set of $\textsf{MIRs}(K)$ s. t. $\thicksim$ is an $H$-maximal equivalence relation. Note that $(o,o) \notin H$ for any occurrence $o$; otherwise, Property (i) from Definition~\ref{maxh} would be violated, as $\thicksim \cap H \neq \emptyset$.
Assume, for contradiction, that $\thicksim$ does not satisfy Consistency. This would imply that $\textsf{R}(K) \wedge \bigwedge_{(o,o')\in \thicksim} (\textsf{R}(o) \leftrightarrow \textsf{R}(o'))$ is inconsistent, meaning there exists an MIR $\thicksim'$ of $K$ such that $\thicksim' \subseteq \thicksim$. This leads to a contradiction since $\thicksim \cap H = \emptyset$ and there exists $(o,o') \in H \cap \thicksim'$. Thus, $\thicksim$ satisfies Consistency.
Now, suppose $\thicksim$ does not satisfy Maximality. This implies that there exists $(o,o') \in \thicksim_c^K \setminus \thicksim$ s. t. $(\thicksim \cup {(o,o')})^* \cap H=\emptyset$, resulting in a contradiction since $\thicksim$ is $H$-maximal. 

Next, we prove the \emph{only if} part.
Assume $\thicksim$ is an MCR of $K$. Let $H = \thicksim_c^K \setminus \thicksim$ (the C-MCR associated with $\thicksim$). Suppose $H$ is not a hitting set of $\textsf{MIRs}(K)$. Then there exists an MIR $\thicksim'$ of $K$ such that $\thicksim' \subseteq \thicksim$, which leads to a contradiction since $\thicksim$ does not include any MIR.
Let $H'$ be an arbitrary minimal hitting set of $\textsf{MIRs}(K)$ s. t.  $H' \subseteq H$. Suppose $\thicksim$ is not $H'$-maximal. Then there exists $(o,o') \in H$ s.t. $(\thicksim \cup \{(o,o')\})^*$ does not intersect with $H'$. Therefore, $(\thicksim \cup {(o,o')})^*$ satisfies Consistency since it does not include any MIR, leading to a contradiction with  Maximality. 
\end{proof}

%%%%%%%%%%%%%%%%%%%%%%%%%%%%%%%%%%%%%%%%%%%%%%%%%%%%%%%%%%%%%%%%%%%%%%%%%%%%%%%%%%%%%
\section{MCR-based Inference Relations}
%%%%%%%%%%%%%%%%%%%%%%%%%%%%%%%%%%%%%%%%%%%%%%%%%%%%%%%%%%%%%%%%%%%%%%%%%%%%%%%%%%%%%

Similar to how MCSes are used to define  inference  relations,  new non-explosive inference relations can be established using  (B)MCRs. 
The key  idea is to restore consistency by modifying  the propositional  base according to  each MCR. This involves assigning a distinct  variable to each equivalence class. Following this assignment, we apply the classical inference relation to derive conclusions.

The main advantage of our approach compared to those based on MCSes is that it retains every formula, including inconsistent ones, and every variable when employing the classical inference relation. In particular, our inference relations, unlike those based on MCSes, do not distinguish between a base and its corresponding formula.

We define an \emph{inference relation} (IR) as a binary relation between PBs and formulas.
  An IR  is considered \emph{more cautious} than another if it is not identical to the other, and  every conclusion entailed by  the first relation can also be entailed by the second~\cite{PinkasL92}.

Considering an MCR  $\thicksim$ of  $K$, we use the following preliminary notions and notations: 
\begin{itemize}
\item  A $\thicksim$-renaming function  is a function $\rho_\thicksim$  that assigns a  distinct propositional variable to each equivalence class in ${Occ(K)\slash\thicksim}$, where for every $p\in\textsf{var}(K)$,   if $C$ is an equivalence  class of $\thicksim$  containing all occurrences of  $p$, then $\rho_\thicksim(C) = p$.  For an  occurrence $o$, we often  write $\rho_\thicksim(o)$ to denote  $\rho_\thicksim([o])$. 
% where $[o]$ is the equivalence class containing $o$.
\item  $\rho_\thicksim(K)$ represents  a PB  constructed by replacing each variable occurrence  $o$ with $\rho_\thicksim(o)$.
\item $\lceil\rho_\thicksim\rceil$ is a function that maps each variable $p$ to the set $\{\rho_\thicksim(o) : o \in Occ(p,K)\}$.
\item For a tuple of distinct variables  $S=(p_1,\ldots{}, p_m)$, we define  $\textsf{P}(\rho_\thicksim, S)$ as the set  of tuples $\lceil\rho_\thicksim\rceil (p_1)\times \cdots{}\times \lceil\rho_\thicksim\rceil (p_m)$.
\end{itemize}

 The choice of  the $\thicksim$-renaming function  does not impact any of our IRs. Thus, for every MCR $\thicksim$,  we assume  that this function is fixed and denoted $\rho_\thicksim$.

\begin{example}
\label{ex:MCR2}
Consider again the PB $K_2=\{p,\neg p, \neg p\vee q\}$ from Example~\ref{ex:MCR}. 
Renaming functions associated with ${\thicksim_1^c}$ and  ${\thicksim_2^c}$ can be defined as follows: 
\begin{itemize}
\item  $\rho_{\thicksim_1^c}=\{\{p_1^+, p_3^-\}\mapsto x_1, \{p_2^-\}\mapsto x_2,\{q_1^+\}\mapsto q\}$
\item $\rho_{\thicksim_2^c}=\{\{p_1^+\}\mapsto x_1, \{p_2^-, p_3^-\}\mapsto x_2,\{q_1^+\}\mapsto q\}$. 
\end{itemize}
We obtain $\rho_{\thicksim_1^c}(K_2)=\{x_1, \neg x_2, \neg x_1\vee q\}$ and  $\rho_{\thicksim_2^c}(K)=\{x_1, \neg x_2, \neg x_2\vee q\}$. 
Moreover, we have $\lceil\rho_{\thicksim_1^c}\rceil=\lceil\rho_{\thicksim_2^c}\rceil=\{p\mapsto \{x_1,x_2\}, q\mapsto\{q\}\}$.
Finally, $\textsf{P}(\rho_{\thicksim_1^c}, (p,q))=\textsf{P}(\rho_{\thicksim_2^c}, (p,q))=\{(x_1,q), (x_2,q)\}$.
\end{example}

We now  introduce four MCR-based IRs: 
\begin{itemize}
\item ${K\myent_1 \phi}$ iff for every MCR $\thicksim$ of $K$, there exists a tuple ${(q_1,\ldots{}, q_m)\in \textsf{P}(\rho_\thicksim, S)}$ such that
 $\rho_\thicksim(K)\vdash \phi[p_1/q_1,\ldots{}, p_m/q_m]$.
\item ${K\myent_2 \phi}$ iff for every MCR $\thicksim$ of $K$ and  for every  tuple ${(q_1,\ldots{}, q_m)\in \textsf{P}(\rho_\thicksim, S})$, 
 $\rho_\thicksim(K)\vdash \phi[p_1/q_1,\ldots{}, p_m/q_m]$.
\item ${K\myent_1^B \phi}$ iff for every BMCR $\thicksim$ of $K$, there exists a tuple  ${(q_1,\ldots{}, q_m)\in \textsf{P}(\rho_\thicksim, S)}$ such that
 $\rho_\thicksim(K)\vdash \phi[p_1/q_1,\ldots{}, p_m/q_m]$.
\item ${K\myent_2^B \phi}$ iff for every BMCR $\thicksim$ of $K$ and  for every tuple ${(q_1,\ldots{}, q_m)\in \textsf{P}(\rho_\thicksim, S)}$, 
 $\rho_\thicksim(K)\vdash \phi[p_1/q_1,\ldots{}, p_m/q_m]$.
\end{itemize}
Here,  $S = (p_1, \ldots, p_m)$ represents  a tuple of distinct variables   such that  $\textsf{var}(K) \cap \textsf{var}(\phi) = \{p_1, \ldots, p_m\}$.

The relation $\myent_1$ asserts that a formula $\phi$ is a consequence of  $K$ if, for every MCR $\thicksim$, the PB derived from $K$ by applying a renaming (which assigns a distinct variable to each equivalence class of $\thicksim$) classically entails  at least one version of $\phi$ that is  renamed in a similar way.
In the case of $\myent_2$, we require that all versions of $\phi$ be entailed by the PB after renaming.  The relations $\myent_1^B$ and $\myent_2^B$ are analogous to  $\myent_1$ and $\myent_2$, respectively,  but  use BMCRs instead of MCRs.

\begin{example}
\label{ex:MCR3}
Returning to Example~\ref{ex:MCR2}, $K_2 \myent_1 p$ and $K_2 \myent_1^B p$ hold because $\rho_{\thicksim_1^c}(K_2) \vdash x_1$ and $\rho_{\thicksim_2^c}(K_2) \vdash x_1$. However, $K_2\not\myent_2 p$ and $K\not\myent_2^B p$ hold since $\rho_{\thicksim_1^c}(K_2) \nvdash x_2$ and $\rho_{\thicksim_2^c}(K_2) \nvdash x_2$. Additionally, both $K_2 \myent_1^B q$ and $K_2 \myent_2^B q$ hold because $\rho_{\thicksim_1^c}(K_2) \vdash q$, and $\thicksim_1^c$ is the unique BMCR of $K_2$.
Using $\rho_{\thicksim_1^c}(K_2) \nvdash q$, we obtain $K_2 \not\myent_1 q$ and $K_2 \not\myent_2 q$.
\end{example}

Note that while our analysis focuses on a limited set of principles to define IRs, our approach can be adapted to establish numerous other IRs by employing principles similar to those used in the case of MCSes (e.g., see~\cite{PinkasL92}). For instance, one can derive alternative IRs by considering conclusions derived  from at least one MCR or from preferred MCRs, taking into account different preference criteria, such as prioritizing the largest MCRs.

Since the unique (B)MCR of a consistent PB $K$ is $\thicksim_c^K$, we deduce that our four IRs coincide with the classical  IR in the case of consistent PBs. 
\begin{proposition}
For every consistent PB $K$ and every formula $\phi$, the following properties  are equivalent:  $K \vdash \phi$, $K \myent_1 \phi$, $K \myent_2 \phi$, $K \myent_1^B \phi$, and $K \myent_2^B \phi$.
\end{proposition}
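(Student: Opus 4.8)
The plan is to reduce all four occurrence-based relations to the classical entailment $K\vdash\phi$ by fully unwinding the definitions in the degenerate case where $K$ is consistent. By the earlier proposition, a consistent $K$ has exactly one MCR, namely $\thicksim_c^K$, whose equivalence classes are precisely the sets $Occ(p,K)$ for $p\in\textsf{var}(K)$. The first step is to compute the associated renaming data. Since each class of $\thicksim_c^K$ gathers \emph{all} occurrences of a single variable $p$, the defining clause of a $\thicksim$-renaming function forces $\rho_{\thicksim_c^K}$ to send that class to $p$; hence $\rho_{\thicksim_c^K}(K)=K$, and for every variable $p$ we get $\lceil\rho_{\thicksim_c^K}\rceil(p)=\{p\}$. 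Consequently, for the tuple $S=(p_1,\ldots,p_m)$ with $\textsf{var}(K)\cap\textsf{var}(\phi)=\{p_1,\ldots,p_m\}$, the set $\textsf{P}(\rho_{\thicksim_c^K},S)=\{p_1\}\times\cdots\times\{p_m\}$ is the singleton $\{(p_1,\ldots,p_m)\}$, and the substitution $\phi[p_1/p_1,\ldots,p_m/p_m]$ is just $\phi$.

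Given this, the conditions defining $\myent_1$ and $\myent_2$ both collapse to the single requirement $\rho_{\thicksim_c^K}(K)\vdash\phi$, i.e.\ $K\vdash\phi$: quantifying over MCRs is quantifying over the single relation $\thicksim_c^K$, and because $\textsf{P}(\rho_{\thicksim_c^K},S)$ is a singleton, the existential ``there exists a tuple'' and the universal ``for every tuple'' coincide. This immediately gives $K\vdash\phi$ iff $K\myent_1\phi$ iff $K\myent_2\phi$.

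For the two BMCR-based relations I would first argue that $\thicksim_c^K$ is the unique BMCR of $K$. By Compliance, every equivalence relation $\thicksim'$ entering Maximality-2 satisfies $\thicksim'\subseteq\thicksim_c^K$, whence $PN(\thicksim')\subseteq PN(\thicksim_c^K)$; thus $PN(\thicksim_c^K)$ is not a proper subset of any such $PN(\thicksim')$, Maximality-2 holds, and $\thicksim_c^K$ is a BMCR. Since every BMCR is an MCR and $\thicksim_c^K$ is the only MCR, it is the only BMCR. With the set of BMCRs equal to $\{\thicksim_c^K\}$, the same singleton-parameter argument shows $\myent_1^B$ and $\myent_2^B$ also reduce to $K\vdash\phi$, closing the chain of equivalences.

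The only genuinely delicate point is this last verification that a BMCR actually exists, namely that $\thicksim_c^K$ itself satisfies Maximality-2. Without it, the universal quantifier ``for every BMCR'' in $\myent_1^B$ could be satisfied vacuously, making $K\myent_1^B\phi$ hold for every $\phi$ and breaking the equivalence; establishing nonemptiness (indeed uniqueness) of the BMCR set is therefore the crux, while the remaining steps are routine unfolding of the notation.
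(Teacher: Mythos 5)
Your proposal is correct and follows exactly the route the paper takes: the paper justifies this proposition with the single observation that the unique (B)MCR of a consistent $K$ is $\thicksim_c^K$, so all four relations collapse to classical entailment, and your write-up simply fills in the routine details (identity renaming, singleton tuple set, and the check that $\thicksim_c^K$ satisfies Maximality-2 so the BMCR set is nonempty). The last check is a worthwhile explicit addition, but it does not constitute a different approach.
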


Next, we  examine the relationships between the introduced IRs.
We can clearly see the following: $\myent_2 \subseteq \myent_1$, $\myent_2^B \subseteq \myent_1^B$, $\myent_1 \subseteq \myent_1^B$, and $\myent_2 \subseteq \myent_2^B$. Further, using the properties related to the conclusion $p$ in Example~\ref{ex:MCR3}, we establish that $\myent_2 \subsetneq \myent_1$, $\myent_2^B \subsetneq \myent_1^B$, and $\myent_1 \not\subseteq \myent_2^B$. By considering the properties related to the conclusion $q$, we find that $\myent_1 \subsetneq \myent_1^B$, $\myent_2 \subsetneq \myent_2^B$, and $\myent_2^B \not\subseteq \myent_1$.
Consequently, $\myent_2$ is more cautious than the other three  IRs, whereas  $\myent_1^B$ is less cautious than the remaining  three IRs.

 Now, we exhibit some relationships between our IRs and Priest's Minimally Inconsistent Logic of Paradox ($LP_m$)~\cite{Priest91}.

 An $LP_m$ interpretation is a function $\lambda$ that assigns a value in $\{\{0\}, \{1\},\{0,1\}\}$ to each formula in  $\textsf{PF}$ and meets the following conditions:  
 $\lambda(\neg\phi)=\{1-v : v\in \lambda(\phi)\}$ and $\lambda(\phi\wedge\psi)=\{v\times v' : v\in \lambda(\phi), v'\in \lambda(\psi)\}$.
  We use $\lambda!$ to denote the set of variables $p$ such that $\lambda(p)=\{0,1\}$.
 
 We say that $\lambda$ is an $LP_m$ model of $\phi$, written $\lambda\models_{LP_m}\phi$,  if and only if  $1\in \lambda(\phi)$.

 An $LP_m$ model of a formula $\phi$ is said to be \emph{minimal} iff, for any  $LP_m$ model  $\lambda'$ of $\phi$, it does not hold that $\lambda'!\subsetneq \lambda!$.
 
 A PB $K$ entails  $\phi$ in $LP_m$, written $K\vdash_{LP_m}\phi$, iff 
 for every minimal $LP_m$ model $\lambda$ of $\bigwedge K$,  $\lambda\models_{LP_m}\phi$ holds. %$\lambda$ is an $LP_m$ model of $\phi$.

 First, we have   $\{p,\neg p\}\not\myent_{1}^B p\wedge\neg p$ and $\{p,\neg p\}\vdash_{LP_m} {p\wedge\neg p}$. Then, we have $\vdash_{LP_m}\not \subseteq \myent_1^B$, which implies  $\vdash_{LP_m}\not \subseteq \myent_1$, $\vdash_{LP_m}\not \subseteq \myent_2$, and $\vdash_{LP_m}\not \subseteq \myent_2^B$.

 Furthermore, both $ \myent_1^B\not\subseteq \vdash_{LP_m}$ and $ \myent_2^B\not\subseteq \vdash_{LP_m}$ hold. This is demonstrated by the fact that 
 $ \{p,\neg p, \neg p\vee q\}\myent_1^B q$ and $ \{p,\neg p, \neg p\vee q\}\myent_2^B q$, whereas $ \{p,\neg p, \neg p\vee q\}\nvdash_{LP_m} q$.
  
Given an $LP_m$ model $\lambda$ of $K$, $\sim_\lambda$ represents an  equivalence relation on  $Occ(K)$ defined by 
${Occ(K)/\sim_\lambda}= \{Occ(p,K) : |\lambda(p)|=1\}\cup \{PosOcc(p,K), $ $NegOcc(p,K) : \lambda(p)=\{0,1\}\}$.

 %%%%%%%%%%%%%%%
 \begin{proposition}
\label{prop:lpm1}
For any formula $\phi$, for any $LP_m$ interpretation,  and for any truth value $v$,
if $v\in \lambda(\phi)$, then $v\in \lambda_{p\mapsto \{0,1\}}(\phi)$ for every $p\in\textsf{var}(\phi)$.
\end{proposition}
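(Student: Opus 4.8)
The plan is to prove the slightly stronger set-theoretic statement $\lambda(\phi)\subseteq \lambda_{p\mapsto\{0,1\}}(\phi)$, which is plainly equivalent to the pointwise claim (``$v\in\lambda(\phi)$ implies $v\in\lambda_{p\mapsto\{0,1\}}(\phi)$''), and to establish it by structural induction on $\phi$. I would first note that the hypothesis $p\in\textsf{var}(\phi)$ is inessential: if $p\notin\textsf{var}(\phi)$, then $\lambda$ and $\lambda_{p\mapsto\{0,1\}}$ agree on all variables occurring in $\phi$, so $\lambda(\phi)=\lambda_{p\mapsto\{0,1\}}(\phi)$ and the inclusion holds trivially. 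Hence it suffices to prove the inclusion for an arbitrary variable $p$.

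The engine of the proof is the monotonicity of the two $LP_m$ truth functions with respect to subset inclusion on the value sets. Concretely, if $A\subseteq A'$ and $B\subseteq B'$ are subsets of $\{0,1\}$, then $\{1-v : v\in A\}\subseteq\{1-v : v\in A'\}$ and $\{v\times v' : v\in A, v'\in B\}\subseteq\{v\times v' : v\in A', v'\in B'\}$, since each value set is defined as the image of the operand value sets under the corresponding Boolean operation. I would record this monotonicity once and then reuse it in the inductive step.

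For the base case $\phi=q$ a variable, either $q=p$, in which case $\lambda_{p\mapsto\{0,1\}}(q)=\{0,1\}\supseteq\lambda(q)$, or $q\neq p$, in which case the two interpretations assign the same value set to $q$; both cases give the desired inclusion. For $\phi=\neg\psi$, the induction hypothesis gives $\lambda(\psi)\subseteq\lambda_{p\mapsto\{0,1\}}(\psi)$, and applying the monotonicity of $v\mapsto 1-v$ yields $\lambda(\neg\psi)\subseteq\lambda_{p\mapsto\{0,1\}}(\neg\psi)$. For $\phi=\psi\wedge\chi$, the induction hypothesis supplies the inclusions for $\psi$ and $\chi$, and monotonicity of the product operation in both arguments yields the inclusion for $\psi\wedge\chi$.

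The proof is essentially bookkeeping, and there is no genuine obstacle. The only point that demands care is the monotonicity lemma for the connective operations, and even this is immediate from the definitions: the $LP_m$ value of a compound formula is obtained by applying the Boolean operation to every combination of values drawn from the operand value sets, so enlarging those sets can only enlarge the set of resulting combinations. The conceptual heart of the statement is simply that $\{0,1\}$ is the largest available value set, so reassigning it to $p$ never discards a truth value already attainable for $\phi$.
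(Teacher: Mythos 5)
Your proof is correct and follows essentially the same route as the paper's: a structural induction on $\phi$ with the same base cases (the variable $p$ itself versus a different variable $q$). The only difference is cosmetic --- you package the inductive step as a single monotonicity observation about the $LP_m$ value functions under set inclusion, whereas the paper splits the conjunction case on $v=1$ versus $v=0$; both amount to the same computation.
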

\begin{proof}
We proceed by structural induction on the formula $\phi$. 
If $\phi=p$, then  $\lambda_{p\mapsto \{0,1\}}(\phi)=\{0,1\}$ by definition; hence,  it holds that  $v\in \lambda_{p\mapsto \{0,1\}}(\phi)$.
If $\phi=q$ where $p\neq q$, we have $v\in \lambda_{p\mapsto \{0,1\}}(\phi)$ since $\lambda_{p\mapsto \{0,1\}}(q)=\lambda(q)$.
Consider now the inductive cases.
If $\phi=\psi\wedge\chi$ and $v=1$, then $v\in  \lambda(\psi)$ and $v\in  \lambda(\chi)$. By the induction hypothesis, 
 $v\in \lambda_{p\mapsto \{0,1\}}(\psi)$ and $v\in \lambda_{p\mapsto \{0,1\}}(\chi)$ hold.This implies that  $v\in \lambda_{p\mapsto \{0,1\}}(\phi)$. 
 Similarly, if $\phi=\psi\wedge\chi$ and $v=0$, then $v\in  \lambda(\psi)$ or $v\in  \lambda(\chi)$. Using the induction hypothesis, 
 $v\in \lambda_{p\mapsto \{0,1\}}(\psi)$ or $v\in \lambda_{p\mapsto \{0,1\}}(\chi)$ holds, leading to $v\in \lambda_{p\mapsto \{0,1\}}(\phi)$. 
 If $\phi=\neg\psi$, then $1-v\in \lambda(\psi)$. By the induction hypothesis, we obtain $1-v\in \lambda_{p\mapsto \{0,1\}}(\psi)$.
Therefore,  $v\in \lambda_{p\mapsto \{0,1\}}(\phi)$ holds. 
\end{proof}

Given two formulas $\phi$ and $\psi$,  and a variable occurrence $o$ in $\phi$, we use $\phi[o/\psi]$ to denote the formula obtained from $\phi$ by replacing $o$ with $\psi$.

 \begin{proposition}
\label{prop:lpm12}
Let $\phi$ be a propositional formula, $\lambda$ an $LP_m$ interpretation,  and $r$ a variable that does not occur in $\phi$. The following properties hold: 
\begin{enumerate}
\item If $o$ is a positive occurrence  in $\phi$ and $1\in \lambda(\phi)$, then $1\in \lambda_{|r\mapsto 1}(\phi[o/r])$. 
\item If $o$ is a negative  occurrence  in $\phi$  and $0\in \lambda(\phi)$, then $0\in \lambda_{|r\mapsto 1}(\phi[o/r])$.   
\item If $o$ is a negative occurrence  in $\phi$  and $1\in \lambda(\phi)$, then $1\in \lambda_{|r\mapsto 0}(\phi[o/r])$. 
\item If $o$ is a positive  occurrence  in $\phi$  and $0\in \lambda(\phi)$, then $0\in \lambda_{|r\mapsto 0}(\phi[o/r])$.   
\end{enumerate}
\end{proposition}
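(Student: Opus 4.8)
The plan is to prove all four statements by mutual structural induction on $\phi$, mirroring the structure of Proposition~\ref{prop:pv1} but adapted to the three-valued $LP_m$ setting. The natural grouping treats properties~1 and~2 together (both involving $\lambda_{|r\mapsto 1}$) and properties~3 and~4 together (both involving $\lambda_{|r\mapsto 0}$); within each group the two statements feed into one another through the negation case, since negating $\phi$ flips the polarity of $o$ and exchanges the roles of the values $0$ and $1$. Before the induction I would record two elementary facts used throughout: the occurrence $o$ lies in exactly one immediate subformula of $\phi$, and its polarity in $\phi$ equals its polarity in that subformula for conjunction but is reversed for negation; and since $r$ does not occur in $\phi$, for any subformula $\chi$ not containing $o$ we have $\lambda_{|r\mapsto v}(\chi)=\lambda(\chi)$.

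For the base case $\phi=p$, the sole occurrence is positive and equals the whole formula, so $\phi[o/r]=r$; then $\lambda_{|r\mapsto 1}(r)=\{1\}$ settles property~1 and $\lambda_{|r\mapsto 0}(r)=\{0\}$ settles property~4, while properties~2 and~3 hold vacuously since there is no negative occurrence. For the conjunction case $\phi=\psi\wedge\chi$, I assume without loss of generality that $o$ occurs in $\psi$, so $\phi[o/r]=\psi[o/r]\wedge\chi$. Here I would use the $LP_m$ conjunction clause: $1\in\lambda(\psi\wedge\chi)$ iff $1$ lies in both factors, whereas $0\in\lambda(\psi\wedge\chi)$ iff $0$ lies in at least one factor. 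For property~1 ($o$ positive, $1\in\lambda(\phi)$) I apply the induction hypothesis to $\psi$ to get $1\in\lambda_{|r\mapsto 1}(\psi[o/r])$ and combine it with $1\in\lambda(\chi)=\lambda_{|r\mapsto 1}(\chi)$. For property~2 ($o$ negative, $0\in\lambda(\phi)$) I split on which factor carries the value $0$: if it is $\psi$ I invoke the induction hypothesis, and if it is $\chi$ the conclusion is immediate since $0\in\lambda_{|r\mapsto 1}(\chi)$ and the value set of the other factor is nonempty, so the product contains $0$. Properties~3 and~4 are handled identically with $\lambda_{|r\mapsto 0}$.

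The negation case $\phi=\neg\psi$ is where the mutual dependence is essential, and it is the step I expect to require the most care. Here $o$ has the opposite polarity in $\psi$, and the $LP_m$ negation clause gives $1\in\lambda(\neg\psi)$ iff $0\in\lambda(\psi)$, and symmetrically. So to prove property~1 for $\phi$ I translate the hypothesis $1\in\lambda(\phi)$ into $0\in\lambda(\psi)$ with $o$ now negative in $\psi$, apply property~2 to $\psi$ to obtain $0\in\lambda_{|r\mapsto 1}(\psi[o/r])$, and negate back to conclude $1\in\lambda_{|r\mapsto 1}(\phi[o/r])$. Property~2 for $\phi$ dually invokes property~1 for $\psi$, and properties~3 and~4 for $\phi$ invoke properties~4 and~3 for $\psi$ respectively. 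Once this cross-referencing is arranged correctly the computations are routine; the only genuine obstacle is the bookkeeping of the polarity flip together with the value flip, so that the correct companion statement is applied in each subcase.
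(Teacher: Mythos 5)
Your proposal is correct and follows essentially the same route as the paper's own proof: a mutual structural induction pairing properties 1--2 (and 3--4), with the conjunction case handled by the $LP_m$ truth clauses and the negation case exchanging the two properties via the polarity and value flip. Your write-up is in fact slightly more explicit than the paper's (e.g., noting that the untouched conjunct keeps its value because $r$ does not occur in it), but there is no substantive difference in approach.
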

\begin{proof}
We focus only  on  the case of the first two properties, the  others being  similar. 
The proof is by mutual induction on the structure of $\phi$. The base case is straightforward: if $\phi$ corresponds to a variable $p$, 
then there are no negative occurrences of any variables; if $1\in \lambda(\phi)$,  then $1\in \lambda_{|r\mapsto 1}(r)$ by definition.
For the inductive  case $\phi=\psi\wedge \chi$, we only need the induction hypothesis. Indeed,  we have $1\in \lambda(\phi)$ iff $1\in \lambda(\psi)$ and $1\in \lambda(\chi)$, and 
$0\in \lambda(\phi)$ iff $0\in \lambda(\psi)$ or $0\in \lambda(\chi)$. Furthermore, $o$ is positive (resp. negative) in $\phi$ iff  $o$ is positive (resp. negative) in either 
$\psi$ or $\chi$. Consider now the inductive case $\phi=\neg\psi$. Let  us start with $o$ being  a  positive  occurrence in $\phi$ and $1\in \lambda(\phi)$. 
Thus, $0\in \lambda(\psi)$ holds. Using the fact that $o$ is negative in $\psi$ and applying the induction hypothesis, we obtain 
 $0\in \lambda_{|r\mapsto 1}(\psi[o/r])$, which implies  $1\in \lambda_{|r\mapsto 1}(\neg\psi[o/r])$. 
 Now, if  $o$ is  a  negative  occurrence in $\phi$ and $0\in \lambda(\phi)$, then  $1\in \lambda(\psi)$ holds.
 Using the fact that $o$ is positive in $\psi$ and applying the induction hypothesis, we deduce 
 $1\in \lambda_{|r\mapsto 1}(\psi[o/r])$, leading to  $0\in \lambda_{|r\mapsto 1}(\neg\psi[o/r])$. 
 \end{proof}

%%%%%%%%%%%%%%%%%%

The next proposition highlights that an MCR can be derived from every  model  in $LP_m$.
  \begin{proposition}
  \label{prop:int1}
  If $\lambda$ is a minimal $LP_m$ model of $K$, then $\sim_\lambda$  is an MCR of $K$.
  \end{proposition}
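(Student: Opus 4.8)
The plan is to verify the three defining conditions of an MCR for $\sim_\lambda$ in turn: Compliance, Consistency, and Maximality. Compliance is immediate from the definition of $\sim_\lambda$, since every block of the partition ${Occ(K)/\sim_\lambda}$ is of the form $Occ(p,K)$, $PosOcc(p,K)$, or $NegOcc(p,K)$ for a single variable $p$; hence two related occurrences always share the same variable. The substantive work lies in Consistency, which I will obtain from the occurrence-splitting lemma (Proposition~\ref{prop:lpm12}), and in Maximality, which I will derive from the assumed minimality of $\lambda$ together with a converse passage from Boolean models back to $LP_m$ models using Proposition~\ref{prop:lpm1}.

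For Consistency, I first note that $1\in\lambda(\bigwedge K)$ is equivalent to $1\in\lambda(\phi)$ for every $\phi\in K$. I would then define a Boolean interpretation $\omega$ on the variables of $\textsf{R}(K)$ by setting $\omega(\textsf{R}(o))$ to the unique value of $\lambda(p)$ when $p=\textsf{var}(o)$ is classical (that is, $|\lambda(p)|=1$), and, when $p$ is paradoxical ($\lambda(p)=\{0,1\}$), setting $\omega(\textsf{R}(o))=1$ if $o$ is positive and $\omega(\textsf{R}(o))=0$ if $o$ is negative. By construction $\omega$ assigns one value to every block of $\sim_\lambda$, so it satisfies all the constraints $\textsf{R}(o)\leftrightarrow\textsf{R}(o')$ with $o\sim_\lambda o'$. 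To show $\omega\models\bigwedge\textsf{R}(K)$, I start from $1\in\lambda(\phi)$ and repeatedly apply Proposition~\ref{prop:lpm12}: for each occurrence of each paradoxical variable I replace it by its fresh variable $\textsf{R}(o)$, invoking property~1 (positive occurrence, value $1$) or property~3 (negative occurrence, value $0$) so that the value $1$ is retained at each step. Once all occurrences of paradoxical variables have been split, every variable remaining in the formula (the surviving classical variables and the freshly introduced ones) carries a singleton value, so the $LP_m$ evaluation collapses to a classical one whose leaf values coincide with those chosen by $\omega$; this gives $\omega(\textsf{R}(\phi))=1$.

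For Maximality I would argue by contradiction. Suppose there is an equivalence relation $\sim'\supsetneq\sim_\lambda$ satisfying Compliance and Consistency, and pick $(o,o')\in{\sim'}\setminus{\sim_\lambda}$. Compliance makes $o,o'$ occurrences of a common variable $p$; since they are not $\sim_\lambda$-related, $p$ cannot be classical (whose occurrences all lie in the single block $Occ(p,K)$), so $p$ is paradoxical and $o,o'$ have opposite polarities. As $\sim'$ contains all of $\sim_\lambda$, transitivity forces the blocks $PosOcc(p,K)$ and $NegOcc(p,K)$ to merge, so $Occ(p,K)$ becomes a single $\sim'$-block. Taking a Boolean model $\omega'$ of $\bigwedge\textsf{R}(K)\wedge\bigwedge_{(a,b)\in\sim'}(\textsf{R}(a)\leftrightarrow\textsf{R}(b))$, I define $\lambda'(q)=\{\omega'(\textsf{R}(o'')) : o''\in Occ(q,K)\}$ for each variable $q$. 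Using Proposition~\ref{prop:lpm1} I lift $\omega'$ to the $LP_m$ setting and raise every variable whose occurrences receive mixed values to $\{0,1\}$; then all occurrences of each original variable share a single value and can be identified, showing that $\lambda'$ is an $LP_m$ model of $K$. Finally, a variable is paradoxical under $\lambda'$ only if it has two $\sim'$-blocks with different $\omega'$-values, which can happen only for variables already paradoxical under $\lambda$, whereas a classical variable keeps its single block; hence $\lambda'!\subseteq\lambda!$, while $p\notin\lambda'!$ since its occurrences now agree. Thus $\lambda'!\subsetneq\lambda!$, contradicting the minimality of $\lambda$.

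The routine parts are Compliance and the bookkeeping of the splitting induction used for Consistency. The main obstacle will be the Maximality argument, and specifically the converse construction of $\lambda'$ from the Boolean model $\omega'$: one must justify via Proposition~\ref{prop:lpm1} that collapsing the renamed variables back to the original ones preserves satisfaction, and check that this process never turns a classical variable paradoxical, so that $\lambda'!$ is genuinely a strict subset of $\lambda!$.
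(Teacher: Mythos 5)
Your proposal is correct and follows essentially the same route as the paper: Compliance is immediate, Consistency is obtained by building the Boolean model that assigns each positive (resp.\ negative) occurrence of a paradoxical variable the value $1$ (resp.\ $0$) and invoking Proposition~\ref{prop:lpm12}, and Maximality is refuted by turning a Boolean model of the enlarged relation's formula into an $LP_m$ model $\lambda'$ with $\lambda'!\subsetneq\lambda!$. The only differences are presentational (you track a single merged pair and define $\lambda'$ by the set of observed values, where the paper enumerates the possible block configurations of $\sim'$), and your justification of why $\lambda'$ is an $LP_m$ model of $K$ is, if anything, slightly more explicit than the paper's.
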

    \begin{proof}
Assume that $\lambda$ is a minimal $LP_m$model of $K$. Clearly, $\thicksim_\lambda$ satisfies Compliance. Additionally, given that  $\lambda$ is an $LP_m$ model of $K$, it follows that $\thicksim_\lambda$ satisfies Consistency. Indeed, using  mainly Proposition~\ref{prop:lpm12},  a model $\omega_\lambda$ of 
$\bigwedge\textsf{R}(K)\wedge \bigwedge_{(o,o')\in \thicksim_\lambda} (\textsf{R}(o)\leftrightarrow \textsf{R}(o'))$ can be obtained by meeting the following conditions: 
\begin{itemize}
\item $\omega_\lambda(\textsf{R}(o)) = 0$ if $o$ is an occurrence of $p$ and either $\lambda(p) = \{0\}$ or $o$ is negative and $\lambda(p) = \{0,1\}$;
\item $\omega_\lambda(\textsf{R}(o)) = 1$ if $o$ is an occurrence of $p$ and either $\lambda(p) = \{1\}$ or $o$ is positive and $\lambda(p) = \{0,1\}$.
\end{itemize}

Suppose for contradiction that $\sim_\lambda$ does not satisfy Maximality. Then there exists an MCR $\sim'$ of $K$ such that $\sim_\lambda \subsetneq \sim'$. This implies the following:
\begin{enumerate}
\item $\{Occ(p, K) :  |\lambda(p)|=1\} \subseteq {Occ(K)/\sim'}$.
\item For every variable $p$ occurring in $\{PosOcc(p, K), NegOcc(p, K) \mid \lambda(p) = \{0,1\}\}$, either $Occ(p, K) \in {Occ(K)/\sim'}$, or $PosOcc(p, K) \in {Occ(K)/\sim'}$ and $NegOcc(p, K) \in {Occ(K)/\sim'}$.
\item There exists a variable $p$ such that $PosOcc(p, K) \in {Occ(K)/\sim}$, $NegOcc(p, K) \in {Occ(K)/\sim}$, and $Occ(p, K) \in {Occ(K)/\sim'}$.
\end{enumerate}
Let $\omega$ be a model of $\Psi = \textsf{R}(\phi) \wedge \bigwedge_{(o,o') \in \sim'} (\textsf{R}(o) \leftrightarrow \textsf{R}(o'))$. We define an $LP_m$ interpretation $\lambda'$ as follows: $\lambda'(p) = \{\omega(o)\}$ if $|EqC(p, \sim')| = 1$ and $o$ is an arbitrary occurrence of $p$; otherwise, $\lambda'(p) = \{0,1\}$.
Using Properties (1), (2), and (3), we obtain $\lambda'! \subsetneq \lambda!$. Therefore, $\lambda$ is not a minimal $LP_m$ model, leading to a contradiction.
\end{proof}

Considering  that  $\vdash_{LP_m}\not \subseteq \myent_1$, the following theorem shows that $\myent_1$ is more cautious than $\vdash_{LP_m}$.

  \begin{theorem}
  For every PB $K$ and every formula $\phi$, if ${K\myent_1\phi}$, then $K\vdash_{LP_m}\phi$.
  \end{theorem}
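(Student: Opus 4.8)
The plan is to prove the statement directly, in the forward direction that avoids any contrapositive gymnastics: I would fix an arbitrary minimal $LP_m$ model $\lambda$ of $\bigwedge K$ and show $1\in\lambda(\phi)$, which is precisely $\lambda\models_{LP_m}\phi$; since $\lambda$ is arbitrary this yields $K\vdash_{LP_m}\phi$. The bridge between the two formalisms is Proposition~\ref{prop:int1}: because $\lambda$ is a minimal $LP_m$ model of $K$, the induced relation $\sim_\lambda$ is an MCR of $K$. Now I would exploit the hypothesis $K\myent_1\phi$ by instantiating its definition at this particular MCR $\sim_\lambda$, obtaining a tuple $(q_1,\ldots,q_m)\in\textsf{P}(\rho_{\sim_\lambda},S)$ such that $\rho_{\sim_\lambda}(K)\vdash\phi[p_1/q_1,\ldots,p_m/q_m]$, where $S=(p_1,\ldots,p_m)$ enumerates $\textsf{var}(K)\cap\textsf{var}(\phi)$. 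The remaining task is to convert this classical entailment over the renamed base into the $LP_m$ fact $1\in\lambda(\phi)$.

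The next step is to manufacture a concrete Boolean model of $\rho_{\sim_\lambda}(K)$ from $\lambda$. The interpretation $\omega_\lambda$ constructed inside the proof of Proposition~\ref{prop:int1} is a model of $\bigwedge\textsf{R}(K)\wedge\bigwedge_{(o,o')\in\sim_\lambda}(\textsf{R}(o)\leftrightarrow\textsf{R}(o'))$ and assigns, by design, equal values to occurrences lying in the same $\sim_\lambda$-class. Hence it descends to a Boolean interpretation $\omega$ of $\rho_{\sim_\lambda}(K)$, by setting $\omega\big(\rho_{\sim_\lambda}(C)\big)$ equal to the common $\omega_\lambda$-value on the class $C$, and $\omega$ is a model of $\rho_{\sim_\lambda}(K)$. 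I would then extend $\omega$ to the variables $r\in\textsf{var}(\phi)\setminus\textsf{var}(K)$, which do not occur in $\rho_{\sim_\lambda}(K)$, by choosing $\omega(r)$ to be any value in $\lambda(r)$. Applying the entailment $\rho_{\sim_\lambda}(K)\vdash\phi[p_1/q_1,\ldots,p_m/q_m]$ to this model gives $\omega\models\phi[p_1/q_1,\ldots,p_m/q_m]$.

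Finally I would read $\omega\big(\phi[p_1/q_1,\ldots,p_m/q_m]\big)=1$ back into $\lambda(\phi)$. Define the classical interpretation $\mu$ by $\mu(p_i)=\omega(q_i)$ for each common variable and $\mu(r)=\omega(r)$ for each remaining variable of $\phi$; the semantic substitution lemma then yields $\mu(\phi)=\omega\big(\phi[p_1/q_1,\ldots,p_m/q_m]\big)=1$. The key observation is that $\mu$ refines $\lambda$ on $\textsf{var}(\phi)$, i.e.\ $\mu(p)\in\lambda(p)$ for every such $p$: for a common variable $p_i$ we have $q_i\in\lceil\rho_{\sim_\lambda}\rceil(p_i)$, so when $|\lambda(p_i)|=1$ the only class forces $q_i=p_i$ and $\omega(q_i)$ equals the unique value of $\lambda(p_i)$, whereas when $\lambda(p_i)=\{0,1\}$ the value $\omega(q_i)$ lies in $\{0,1\}=\lambda(p_i)$; and for the remaining variables $\omega(r)\in\lambda(r)$ by choice. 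A short structural induction on $\phi$ (the easy direction of the classical/$LP$ correspondence) then shows that any refinement $\mu$ of $\lambda$ satisfies $\mu(\phi)\in\lambda(\phi)$; together with $\mu(\phi)=1$ this gives $1\in\lambda(\phi)$, completing the argument.

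I expect the main obstacle to be the middle step: correctly matching the per-class renaming $\rho_{\sim_\lambda}$ against the two-class structure of $\sim_\lambda$ on the paradoxical variables, where $PosOcc(p,K)$ and $NegOcc(p,K)$ form separate classes, so that the single uniform substitution $p_i/q_i$ supplied by $\myent_1$ genuinely corresponds to selecting one admissible truth value $\omega(q_i)\in\lambda(p_i)$. The rest is routine bookkeeping: verifying that $\omega$ inherited from $\omega_\lambda$ is a model of $\rho_{\sim_\lambda}(K)$, treating the variables of $\phi$ outside $K$, and discharging the two elementary inductions (the substitution lemma and $\mu(\phi)\in\lambda(\phi)$).
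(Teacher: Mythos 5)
Your proposal is correct and follows essentially the same route as the paper's proof: fix a minimal $LP_m$ model $\lambda$, use Proposition~\ref{prop:int1} to view $\sim_\lambda$ as an MCR, instantiate $\myent_1$ there, build the Boolean model of $\rho_{\sim_\lambda}(K)$ from $\lambda$ via the polarity-based assignment, and transfer $\omega\models\phi[p_1/q_1,\ldots,p_m/q_m]$ back to $1\in\lambda(\phi)$. Your final ``refinement'' induction is just Proposition~\ref{prop:lpm1} applied iteratively, which is exactly how the paper closes the argument (and your explicit handling of the substitution lemma and of the variables of $\phi$ outside $K$ is, if anything, slightly more careful than the paper's).
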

  \begin{proof}
Assume that $K \myent_1 \phi$. Then, for any MCR $\thicksim$, there exists a tuple $(q_1, \ldots, q_n) \in \textsf{P}(\rho_\thicksim, S)$ such that $\rho_\thicksim(K) \vdash \phi[p_1/q_1, \ldots, p_m/q_m]$, where $S = (p_1, \ldots, p_m)$ and $\textsf{var}(K) \cap \textsf{var}(\phi) = \{p_1, \ldots, p_m\}$. Let $\lambda$ be a minimal $LP_m$ model of $K$. By Proposition~\ref{prop:int1}, $\sim_\lambda$ is an MCR of $K$. Therefore, there exists a tuple $(q_1, \ldots, q_n) \in \textsf{P}(\rho_{\thicksim_\lambda}, S)$ such that $\rho_{\thicksim_\lambda}(K) \vdash \phi[p_1/q_1, \ldots, p_m/q_m]$.

Let $\omega$ be a Boolean interpretation such that:
\begin{itemize}
\item $\omega(\rho_{\sim_\lambda}(o)) = 0$ if $o$ is an occurrence of $p$ and either $\lambda(p) = \{0\}$ or $o$ is negative and $\lambda(p) = \{0,1\}$;
\item $\omega(\rho_{\sim_\lambda}(o)) = 1$ if $o$ is an occurrence of $p$ and either $\lambda(p) = \{1\}$ or $o$ is positive and $\lambda(p) = \{0,1\}$.
\end{itemize}
Using Proposition~\ref{prop:lpm12}, the interpretation $\omega$ is a model of $\rho_{\thicksim_\lambda}(K)$. % because $\lambda \models_{LP_m} K$ and the following properties hold:
%\begin{itemize}
%\item For each variable $p$ with $\lambda(p) = \{0,1\}$, the associated variables with respect to $\rho_{\sim_\lambda}$ are pure in $\rho_{\thicksim_\lambda}(K)$ (i.e., $\omega$ assigns $0$ to negative occurrences and $1$ to positive ones----see Proposition~\ref{prop:pv1}).
%\item For each variable $p$ with $\lambda(p) \neq {0,1}$, $\rho_{\thicksim_\lambda}(o) = p$ for every occurrence of $p$.
%\end{itemize}
Thus, $\omega \models \phi[p_1/q_1, \ldots, p_m/q_m]$ holds. Let $X= \{p_i\in \{p_1, \ldots, p_m\} : q_i\neq p_i\}$. Define an  $LP_m$ interpretation $\lambda'$ as follows: for every $p \notin X$, $\lambda'(p) = \{\omega(p)\}$, and for every $p \in X$, $\lambda'(p) = \{0,1\}$.
Using  Proposition~\ref{prop:lpm1} with $\omega \models \phi[p_1/q_1, \ldots, p_m/q_m]$,  we obtain that $\lambda'$ satisfies $\phi$. However, we know that for any variable $p_i$ occurring in $S$, we have $q_i \neq p_i$ if and only if $\lambda(p_i) = \{0,1\}$. Thus, for every variable $p$ in $\phi$, it holds that $\lambda'(p)=\lambda(p)$.
We conclude that $\lambda$ satisfies  $\phi$.
\end{proof}

  %%%%%%%%%%%%%%%%%%%%%%%%%%%%%%%%%%%%%%%%%%%%%%%%%%%%%%%%%%%%%%%%%%%%%%%%%%%%%%%
 \section{Occurrence-based Semantics} 
 %%%%%%%%%%%%%%%%%%%%%%%%%%%%%%%%%%%%%%%%%%%%%%%%%%%%%%%%%%%%%%%%%%%%%%%%%%%%%%%
 Building on the approach  of differentiating between occurrences  of the same variable to restore consistency,  
 we introduce an unusual semantics that assigns truth values to the  occurrences of variables rather than to the variables themselves. 
The entailment is established through Boolean interpretations that align with the occurrence-based models.

An \emph{occurrence-based interpretation} (o-interpretation for short) of a formula $\phi$ is a function $\mu$ mapping each occurrence in $Occ(\phi)$ to either $0$ or $1$. 
We say that $\mu$ is an \emph{o-model} of $\phi$ if and only if $\omega_\mu \models \textsf{R}(\phi)$, where $\omega_\mu$ is any  boolean interpretation such that, for each $o\in Occ(\phi)$, $\omega_\mu (\textsf{R}(o))= \mu(o)$. An o-model of a PB $K$ is an o-model of its corresponding formula $\bigwedge K$.

To define our IRs in the framework of  occurrence-based semantics, we examine two minimality properties in o-models: a-minimality and b-minimality.

We denote by  $\textsf{diff}_a(\mu)$  the set of ordered pairs $\{(o, o') \in Occ(\phi) \times Occ(\phi) : \textsf{var}(o) = \textsf{var}(o'), \mu(o) \neq \mu(o')\}$. 
We then define the preorder relation $\preceq_a$ on the o-models of $\phi$ such that $\mu \preceq_a \mu'$ if and only if $\textsf{diff}_a(\mu) \subseteq \textsf{diff}_a(\mu')$. The corresponding strict preorder is denoted by $\prec_a$.

An o-model $\mu$ of $\phi$ is considered \emph{a-minimal} if it is minimal with respect to $\preceq_a$, i.e., for any o-interpretation $\mu'$ of $\phi$ where $\mu' \prec_a \mu$, $\mu'$ is not an o-model of $\phi$. 

Observe that for every a-minimal o-model $\mu$ of a consistent formula, $\textsf{diff}_a(\mu) = \emptyset$. This indicates that every a-minimal o-model in such cases can be regarded as a Boolean interpretation: all occurrences of each variable have the same truth value.

A relationship between MCRs and a-minimal o-models is established in the following proposition.
\begin{proposition} 
\label{prop:right1}
For every formula $\phi$, 
if $\mu$ is an a-minimal o-model of $\phi$, then  $\thicksim_t^\phi\setminus \textsf{diff}_a(\mu)$ is an MCR of $\phi$.
\end{proposition}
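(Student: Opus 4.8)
The plan is to set $\thicksim_\mu := \thicksim_t^\phi\setminus\textsf{diff}_a(\mu)$ and verify the three defining conditions of an MCR in turn, treating $\phi$ as the PB $\{\phi\}$. First I would record the key characterization: $o\thicksim_\mu o'$ holds precisely when $\textsf{var}(o)=\textsf{var}(o')$ and $\mu(o)=\mu(o')$, since $\thicksim_t^\phi$ contains every same-variable pair and removing $\textsf{diff}_a(\mu)$ discards exactly the same-variable pairs on which $\mu$ disagrees. From this, reflexivity, symmetry and transitivity are immediate (both ``same variable'' and ``same $\mu$-value'' are transitive), so $\thicksim_\mu$ is an equivalence relation, and Compliance holds by construction. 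For Consistency I would use the Boolean interpretation $\omega_\mu$ with $\omega_\mu(\textsf{R}(o))=\mu(o)$: since $\mu$ is an o-model we have $\omega_\mu\models\textsf{R}(\phi)$, and for every $(o,o')\in\thicksim_\mu$ the equality $\mu(o)=\mu(o')$ gives $\omega_\mu\models\textsf{R}(o)\leftrightarrow\textsf{R}(o')$, so $\omega_\mu$ models the whole conjunction.

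The substantive part is Maximality, which I would establish by contraposition using the a-minimality of $\mu$. Suppose some equivalence relation $\thicksim'$ satisfies Compliance and Consistency with $\thicksim_\mu\subsetneq\thicksim'$, and pick a witness pair $(o_0,o_0')\in\thicksim'\setminus\thicksim_\mu$. By Compliance $\textsf{var}(o_0)=\textsf{var}(o_0')$, and since this pair is absent from $\thicksim_\mu$ the characterization above forces $\mu(o_0)\neq\mu(o_0')$, i.e. $(o_0,o_0')\in\textsf{diff}_a(\mu)$. By Consistency I would fix a model $\omega'$ of $\textsf{R}(\phi)\wedge\bigwedge_{(o,o')\in\thicksim'}(\textsf{R}(o)\leftrightarrow\textsf{R}(o'))$ and define an o-interpretation $\mu'$ by $\mu'(o)=\omega'(\textsf{R}(o))$; because $\omega'\models\textsf{R}(\phi)$, this $\mu'$ is an o-model of $\phi$.

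Finally I would show $\mu'\prec_a\mu$, contradicting a-minimality. For the inclusion $\textsf{diff}_a(\mu')\subseteq\textsf{diff}_a(\mu)$, I argue contrapositively: if a same-variable pair $(o,o')$ had $\mu(o)=\mu(o')$, it would lie in $\thicksim_\mu\subseteq\thicksim'$, whence $\omega'\models\textsf{R}(o)\leftrightarrow\textsf{R}(o')$ gives $\mu'(o)=\mu'(o')$, so $(o,o')\notin\textsf{diff}_a(\mu')$; thus every pair of $\textsf{diff}_a(\mu')$ already lies in $\textsf{diff}_a(\mu)$. Strictness comes from the witness pair: $(o_0,o_0')\in\textsf{diff}_a(\mu)$, yet $(o_0,o_0')\in\thicksim'$ forces $\omega'\models\textsf{R}(o_0)\leftrightarrow\textsf{R}(o_0')$, so $\mu'(o_0)=\mu'(o_0')$ and $(o_0,o_0')\notin\textsf{diff}_a(\mu')$. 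Hence $\textsf{diff}_a(\mu')\subsetneq\textsf{diff}_a(\mu)$, i.e. $\mu'\prec_a\mu$ with $\mu'$ an o-model, contradicting the a-minimality of $\mu$. I expect the only delicate point to be the bookkeeping separating the two membership directions --- invoking $\thicksim_\mu\subseteq\thicksim'$ for the inclusion and the chosen witness pair for the strict inequality --- rather than any genuinely hard argument; the whole proof is essentially a translation between equivalence relations on occurrences and truth-value agreement patterns of o-models.
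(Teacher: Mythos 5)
Your proposal is correct and follows essentially the same route as the paper's proof: establish the characterization $o\thicksim o'$ iff $\textsf{var}(o)=\textsf{var}(o')$ and $\mu(o)=\mu(o')$, read off the equivalence-relation and Compliance properties, obtain Consistency from the interpretation $\omega_\mu$ induced by the o-model $\mu$, and derive Maximality from the a-minimality of $\mu$. The only difference is that you spell out the Maximality step in full (constructing $\mu'$ from a model of the enlarged relation and verifying $\textsf{diff}_a(\mu')\subsetneq\textsf{diff}_a(\mu)$), which the paper compresses into a single sentence; your expanded version is accurate.
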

\begin{proof}
First, we establish $(o,o')$ belongs to  $\thicksim=\thicksim_t^\phi\setminus \textsf{diff}_a(\mu)$ iff $\mu(o)=\mu(o')$. This relation is an equivalence relation as the equality operation inherently satisfies reflexivity, symmetry, and transitivity.
Next, given that $\mu$ satisfies $\phi$, it follows that $R(\phi)\wedge \bigwedge_{(o,o')\in \thicksim} 
(\textsf{R}(o)\leftrightarrow \textsf{R}(o'))$ is consistent.  Thus, $\thicksim$  upholds the property of  Consistency. Finally, 
the minimality of $\mu$ w.r.t. $\preceq_a$ implies the maximality of $\thicksim$ w.r.t. set inclusion.
\end{proof}

Given an MCR $\thicksim$ of $\phi$ and a model $\omega$ of $\Psi=\textsf{R}(\phi)\wedge \bigwedge_{(o,o')\in \thicksim}  (\textsf{R}(o)\leftrightarrow \textsf{R}(o'))$, 
we define $\mu_\omega$ as an  o-interpretation such that  $\mu(o)=1$ if and only if $\omega(\textsf{R}(o))=1$. Then, 
the set  $\textsf{OM}(\thicksim)$ consists of the o-interpretations $\{\mu_\omega : \omega\in\textsf{mod}(\Psi)\}$.

\begin{proposition} 
\label{prop:left1}
For every formula $\phi$, 
if $\thicksim$ is an MCR of $\phi$, then $\textsf{OM}(\thicksim)$ is a set of a-minimal o-models of $\phi$. 
\end{proposition}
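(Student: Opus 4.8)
The plan is to fix an arbitrary $\omega\in\textsf{mod}(\Psi)$ and show that the induced o-interpretation $\mu=\mu_\omega$ is an a-minimal o-model of $\phi$; since $\textsf{OM}(\thicksim)$ consists exactly of the o-interpretations $\mu_\omega$ with $\omega\in\textsf{mod}(\Psi)$, this establishes the claim for the whole set. First I would verify that $\mu$ is an o-model: by construction $\omega(\textsf{R}(o))=\mu(o)$ for every $o\in Occ(\phi)$, and since $\omega\models\Psi$ we have in particular $\omega\models\textsf{R}(\phi)$. Because the truth value of $\textsf{R}(\phi)$ depends only on the values assigned to the variables $\textsf{R}(o)$ with $o\in Occ(\phi)$, the interpretation $\omega$ itself can serve as a witness $\omega_\mu$, so $\omega_\mu\models\textsf{R}(\phi)$ and $\mu$ is an o-model of $\phi$.

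The substantive part is a-minimality, which I would prove by contradiction. Suppose there is an o-model $\mu'$ of $\phi$ with $\mu'\prec_a\mu$, that is, $\textsf{diff}_a(\mu')\subsetneq\textsf{diff}_a(\mu)$. The idea is to read each o-model as an equivalence relation and transport the strict $\prec_a$-decrease into a strict increase of equivalence relations, contradicting the Maximality clause of the MCR $\thicksim$. Concretely, I set $\thicksim_\mu=\thicksim_c^\phi\setminus\textsf{diff}_a(\mu)$ and $\thicksim_{\mu'}=\thicksim_c^\phi\setminus\textsf{diff}_a(\mu')$, where $\thicksim_c^\phi$ is the full same-variable relation on $Occ(\phi)$. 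As observed in the proof of Proposition~\ref{prop:right1}, each of these is exactly the relation ``same variable and equal value'', hence an equivalence relation satisfying Compliance.

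The three links I would then check are: (i) $\thicksim\subseteq\thicksim_\mu$, because every pair $(o,o')\in\thicksim$ has $\textsf{var}(o)=\textsf{var}(o')$ by Compliance and $\omega(\textsf{R}(o))=\omega(\textsf{R}(o'))$ since $\omega$ satisfies the biconditionals in $\Psi$, whence $\mu(o)=\mu(o')$ and $(o,o')\notin\textsf{diff}_a(\mu)$; (ii) $\thicksim_\mu\subsetneq\thicksim_{\mu'}$, which is immediate since removing the strictly smaller set $\textsf{diff}_a(\mu')$ leaves a strictly larger relation, any witness $(o,o')\in\textsf{diff}_a(\mu)\setminus\textsf{diff}_a(\mu')$ lying in $\thicksim_{\mu'}\setminus\thicksim_\mu$; and (iii) $\thicksim_{\mu'}$ satisfies Consistency, since $\omega_{\mu'}\models\textsf{R}(\phi)$ because $\mu'$ is an o-model, and $\omega_{\mu'}$ validates every biconditional $\textsf{R}(o)\leftrightarrow\textsf{R}(o')$ with $(o,o')\in\thicksim_{\mu'}$ (such pairs satisfy $\mu'(o)=\mu'(o')$ by the definition of $\thicksim_{\mu'}$). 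Chaining (i) and (ii) yields $\thicksim\subsetneq\thicksim_{\mu'}$, and together with (iii) this contradicts the Maximality of $\thicksim$, completing the argument.

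I expect the main obstacle to be the bookkeeping of the correspondence between the $\preceq_a$ order on o-models and set inclusion on equivalence relations, with the correct direction: a smaller $\textsf{diff}_a$ must translate into a \emph{larger} relation, and I must ensure that strictness survives the passage through the intermediate relation $\thicksim_\mu$ rather than comparing $\thicksim$ and $\thicksim_{\mu'}$ directly. Once these three links are in place, the contradiction with Maximality is immediate and the proof closes.
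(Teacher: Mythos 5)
Your proof is correct and takes essentially the same route as the paper's: both rest on the complementation correspondence between $\textsf{diff}_a(\mu_\omega)$ and an equivalence relation extending $\thicksim$, so that a $\prec_a$-smaller o-model would produce a compliant, consistent equivalence relation strictly containing $\thicksim$, contradicting Maximality. If anything your version is slightly tidier: the paper asserts the exact identity between $\textsf{diff}_a(\mu_\omega)$ and the complement of $\thicksim$ in the same-variable relation (which itself requires a small maximality argument it leaves implicit), whereas you only use the inclusion $\thicksim\subseteq\thicksim_c^\phi\setminus\textsf{diff}_a(\mu_\omega)$, which is all that is needed.
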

\begin{proof}
Let $\mu_\omega$ in $\textsf{OM}(\thicksim)$. Owing to the fact that $\omega$ is a model of $\textsf{R}(\phi)$, $\mu$ is an o-model of $\phi$.
Moreover,  we have  $\omega(\textsf{R}(o))\neq \omega(\textsf{R}(o'))$ for any $(o,o')\in {\thicksim_t^\phi \setminus \thicksim}$. Hence ${\thicksim_t^\phi\setminus \thicksim=\textsf{diff}_a(\mu_\omega)}$ holds. 
Using  the maximality of $\thicksim$, we obtain the minimality of $\textsf{diff}_a(\mu_\omega)$.
\end{proof}

We say that a Boolean interpretation $\omega$ is \emph{compatible with an o-interpretation} $\mu$ of $\phi$ if and only if 
for every propositional variable $p$ occurring in $\phi$, there exists an occurrence $o$ of this variable  such that $\omega(p)=\mu(o)$. 

We define the IRs $\myent_{a1}$ and $\myent_{a2}$ as follows:
\begin{itemize}
\item  $K\myent_{a1} \phi$ iff for each a-minimal o-model $\mu$ of $K$, there exists a Boolean interpretation  $\omega$ that  is compatible with $\mu$  
such that  $\omega\models\phi$.
\item  $K\myent_{a2} \phi$ iff for each a-minimal o-model $\mu$ of $\phi$, and for each Boolean interpretation  $\omega$  which  is compatible with $\mu$,   
 it follows that $\omega\models\phi$. 
\end{itemize}

Using mainly Proposition~\ref{prop:right1}, we derive the following property.

\begin{proposition}
  For every PB $K$ and every formula $\phi$, $if K\myent_{1}\phi$, then $K\vdash_{a1}\phi$.
\end{proposition}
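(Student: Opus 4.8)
The plan is to exploit Proposition~\ref{prop:right1}, which turns every a-minimal o-model of $K$ into an MCR, and then to transport the classical entailment guaranteed by $\myent_1$ back into an ordinary Boolean interpretation compatible with that o-model. So I would fix an arbitrary a-minimal o-model $\mu$ of $K$ and aim to produce a Boolean interpretation $\omega$ that is compatible with $\mu$ and satisfies $\phi$; since $\mu$ is arbitrary this yields $K\myent_{a1}\phi$.

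First I would set $\thicksim_\mu = \thicksim_t^K\setminus\textsf{diff}_a(\mu)$. By Proposition~\ref{prop:right1} (applied to $\bigwedge K$, recalling that o-models of $K$ are o-models of $\bigwedge K$), $\thicksim_\mu$ is an MCR of $K$, and by the argument in its proof $o\thicksim_\mu o'$ holds exactly when $o,o'$ are occurrences of the same variable with $\mu(o)=\mu(o')$. I would then read off from $\mu$ a Boolean model $\omega^\ast$ of the renamed base $\rho_{\thicksim_\mu}(K)$ by setting $\omega^\ast(\rho_{\thicksim_\mu}(o))=\mu(o)$ for every occurrence $o$: this is well defined because $\thicksim_\mu$-equivalent occurrences carry the same $\mu$-value, and it satisfies $\rho_{\thicksim_\mu}(K)$ because the renaming $\rho_{\thicksim_\mu}$ merges exactly those occurrences on which $\omega_\mu$ (the interpretation witnessing that $\mu$ is an o-model) is forced to agree, so the leaf-by-leaf truth-value computation is identical to that of $\omega_\mu$ on $\textsf{R}(K)$.

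Next I would invoke the hypothesis $K\myent_1\phi$ for the particular MCR $\thicksim_\mu$: it provides a tuple $(q_1,\ldots,q_m)\in\textsf{P}(\rho_{\thicksim_\mu},S)$, with $S=(p_1,\ldots,p_m)$ enumerating $\textsf{var}(K)\cap\textsf{var}(\phi)$, such that $\rho_{\thicksim_\mu}(K)\vdash\phi[p_1/q_1,\ldots,p_m/q_m]$, whence $\omega^\ast\models\phi[p_1/q_1,\ldots,p_m/q_m]$. For each $i$, membership $q_i\in\lceil\rho_{\thicksim_\mu}\rceil(p_i)$ means $q_i=\rho_{\thicksim_\mu}(o_i)$ for some occurrence $o_i$ of $p_i$, so $\omega^\ast(q_i)=\mu(o_i)$. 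I would now define the target interpretation $\omega$ on original variables by $\omega(p_i)=\mu(o_i)$ for the shared variables, $\omega(p)=\mu(o_p)$ for an arbitrarily chosen occurrence $o_p$ of each $p\in\textsf{var}(K)\setminus\textsf{var}(\phi)$, and $\omega(p)=\omega^\ast(p)$ for the remaining variables of $\phi$. Compatibility with $\mu$ is then immediate, since for every variable of $K$ the value of $\omega$ equals $\mu$ on some occurrence.

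The remaining, and technically most delicate, step is to verify $\omega\models\phi$. Here I would use a routine substitution lemma: because $\omega(p_i)=\omega^\ast(q_i)$ for the substituted variables and $\omega$ agrees with $\omega^\ast$ on the variables of $\phi$ outside $K$, one has $\omega(\phi)=\omega^\ast(\phi[p_1/q_1,\ldots,p_m/q_m])$, and the right-hand side equals $1$ by the previous paragraph. I expect the main obstacle to be bookkeeping rather than deep content: pinning down the precise reading of ``compatible'' (relative to the variables of $K$ that $\mu$ interprets), checking that $\omega^\ast$ is genuinely a model of $\rho_{\thicksim_\mu}(K)$ and not merely of $\textsf{R}(K)$, and making sure the occurrence $o_i$ used to define $\omega(p_i)$ is the same one that realises compatibility, so that no conflicting demand is placed on $\omega(p_i)$.
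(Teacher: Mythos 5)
Your proposal is correct and follows exactly the route the paper intends (the paper omits the details, saying only that the result follows ``using mainly Proposition~\ref{prop:right1}''): convert each a-minimal o-model $\mu$ into the MCR $\thicksim_c^K\setminus\textsf{diff}_a(\mu)$, read off a model $\omega^\ast$ of $\rho_{\thicksim_\mu}(K)$ from $\mu$, apply the $\myent_1$ hypothesis to get a witness tuple, and pull the resulting model of the substituted conclusion back to a Boolean interpretation compatible with $\mu$. The bookkeeping you flag (well-definedness of $\omega^\ast$, the substitution step, and using the same occurrence $o_i$ for both compatibility and the value of $\omega(p_i)$) is handled correctly.
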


In fact,  $\myent_{1}$ is more cautious than  $\myent_{a1}$. We can illustrate this using  $K=\{p, \neg p, q\vee r\}$ and $\phi=(\neg p\wedge (\neg q\vee \neg r))\vee (p\wedge q\wedge r)$. Indeed, we have $K\myent_{a1} \phi$ but $K\not\myent_{1} \phi$. 

The distinction between   $\myent_{a1}$ and  $\myent_{1}$ mainly arises from the fact that  each MCR $\myent_{1}$ necessitates finding an adaptation of the conclusion that can be entailed, while $\myent_{a1}$ requires an adaptation for each o-interpretation corresponding to an MCR. For instance, the previous PB $K$ admits a single MCR 
$\sim$, defined by ${Occ(K)/\sim} =\{\{p_1^+\}, \{p_2^-\}, \{q_1^+\}, \{r_1^-\}\}$.  It does not hold that $K\myent_{1} \phi$ because 
$\{x_1, \neg x_2, q\vee r\}$ does not entail any of the possible adaptations  $(\neg x_1\wedge (\neg q\vee \neg r))\vee (x_1\wedge q\wedge r)$ and 
$(\neg x_2\wedge (\neg q\vee \neg r))\vee (x_2\wedge q\wedge r)$.  We have three  a-minimal o-models 
$\mu_1=\{p_1^+\mapsto 1, p_2^-\mapsto 0, q_1^+\mapsto 1, r_1^-\mapsto 0\}$, $\mu_2=\{p_1^+\mapsto 1, p_2^-\mapsto 0, q_1^+\mapsto 0, r_1^-\mapsto 1\}$ and 
$\mu_3=\{p_1^+\mapsto 1, p_2^-\mapsto 0, q_1^+\mapsto 1, r_1^-\mapsto 1\}$.  The conclusion $\phi$ can be derived  using three respective interpretations $\omega_1$, 
 $\omega_2$, and  $\omega_3$, which satisfy  the following conditions:  
$\{ p\mapsto 0, q\mapsto 1, r\mapsto 0\}\subseteq \omega_1$, $\{ p\mapsto 0, q\mapsto 0, r\mapsto 1\}\subseteq \omega_2$,  and 
$\{p\mapsto 1, q\mapsto 1, r\mapsto 1\}\subseteq \omega_3$.

Interestingly,  the IR $\myent_{a2}$ coincides with $\myent_{2}$. 

\begin{proposition}
  For every PB $K$ and any formula $\phi$, $K\myent_{a2}\phi$ iff $K\vdash_{2}\phi$.
\end{proposition}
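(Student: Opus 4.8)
The plan is to prove the equivalence by rewriting each of $\myent_{a2}$ and $\myent_2$ as a single universally quantified statement of the form ``for all $\ldots$, $\omega\models\phi$'' ranging over Boolean interpretations $\omega$, and then to check that the two families of interpretations being quantified over are literally the same. The bridge between the occurrence-based side and the renaming side is the pair of correspondences already established in Propositions~\ref{prop:right1} and~\ref{prop:left1}: every a-minimal o-model $\mu$ of $\bigwedge K$ yields the MCR $\thicksim_\mu=\thicksim_t^{\bigwedge K}\setminus\textsf{diff}_a(\mu)$, and conversely each MCR $\thicksim$ produces through $\textsf{OM}(\thicksim)$ a family of a-minimal o-models, with $\mu\in\textsf{OM}(\thicksim_\mu)$. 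Together these give that the set of all a-minimal o-models of $\bigwedge K$ equals $\bigcup_{\thicksim\in\textsf{MCRs}(K)}\textsf{OM}(\thicksim)$, and that each $\mu\in\textsf{OM}(\thicksim)$ arises from a model $\nu$ of $\bigwedge\rho_\thicksim(K)$ via $\mu(o)=\nu(\rho_\thicksim(o))$.

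Next I would record the two elementary facts that drive the matching. First, writing $S=(p_1,\ldots,p_m)=\textsf{var}(K)\cap\textsf{var}(\phi)$, for any $\mu\in\textsf{OM}(\thicksim)$ with underlying $\nu$ one has $\{\mu(o):o\in Occ(p_i,K)\}=\{\nu(y):y\in\lceil\rho_\thicksim\rceil(p_i)\}$, directly from the definition of $\lceil\rho_\thicksim\rceil$. Second, the substitution--evaluation identity: for any interpretation $\nu$ and tuple $q=(q_1,\ldots,q_m)$, $\nu\models\phi[p_1/q_1,\ldots,p_m/q_m]$ holds iff $\omega\models\phi$, where $\omega$ is any interpretation with $\omega(p_i)=\nu(q_i)$ for each $i$ and $\omega(p)=\nu(p)$ for every $p\in\textsf{var}(\phi)\setminus S$. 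These let me translate the inner clause of $\myent_2$, namely $\rho_\thicksim(K)\vdash\phi[p_1/q_1,\ldots,p_m/q_m]$, into ``$\omega\models\phi$ for every $\omega$ satisfying $\omega(p_i)=\nu(q_i)$ with $\nu\models\bigwedge\rho_\thicksim(K)$'', which already has the shape of the compatibility clause in $\myent_{a2}$.

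The crux is then to show that, for a fixed MCR $\thicksim$, ranging over all tuples $q\in\textsf{P}(\rho_\thicksim,S)$ and all models $\nu$ of $\bigwedge\rho_\thicksim(K)$ produces exactly the Boolean interpretations that are compatible with some $\mu\in\textsf{OM}(\thicksim)$, as far as their restriction to $\textsf{var}(\phi)$ is concerned. Two points need care, and this is where I expect the main obstacle. The variables of $\phi$ not occurring in $K$ are unconstrained by $\bigwedge\rho_\thicksim(K)$, so as $\nu$ varies their values range freely over $\{0,1\}$; this is precisely the freedom that compatibility grants $\omega$ on those variables, so the universal quantification over them hidden inside $\rho_\thicksim(K)\vdash\cdots$ coincides with that of $\myent_{a2}$. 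For the shared variables $p_i$, a tuple $q$ selects $\omega(p_i)=\nu(q_i)$, which by the first fact ranges over exactly $\{\mu(o):o\in Occ(p_i,K)\}$ as $q_i$ runs through $\lceil\rho_\thicksim\rceil(p_i)$, while conversely every value allowed by compatibility equals $\nu(q_i)$ for a suitable $q_i$. Reconciling the two quantifier orderings amounts to verifying that any prescribed combination of an attainable value on each $p_i$ and an arbitrary value on each remaining variable of $\phi$ is realized by some admissible pair $(q,\nu)$; this follows because $\lceil\rho_\thicksim\rceil(p_i)$ enumerates the at most two classes of $p_i$ and because $\nu$ may be freely perturbed on the variables absent from $\rho_\thicksim(K)$.

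Finally, having shown that the family of interpretations tested by $\myent_2$ and by $\myent_{a2}$ coincides for each $\thicksim$, and that both relations quantify over the same collection of MCRs (equivalently, over the same a-minimal o-models via the correspondence of the first paragraph), the two ``for all $\ldots$, $\omega\models\phi$'' statements become identical. This yields $K\myent_{a2}\phi$ iff $K\myent_2\phi$, as required.
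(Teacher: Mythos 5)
Your proposal is correct and follows essentially the same route as the paper: both rely on the correspondence between a-minimal o-models and MCRs from Propositions~\ref{prop:right1} and~\ref{prop:left1}, and both match compatible Boolean interpretations with the tuples in $\textsf{P}(\rho_\thicksim,S)$ via the identity $\{\mu(o):o\in Occ(p_i,K)\}=\{\nu(q_i):q_i\in\lceil\rho_\thicksim\rceil(p_i)\}$ together with the substitution--evaluation identity. Your presentation merely reorganizes the paper's two separate implications into a single statement that the two families of tested interpretations coincide.
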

\begin{proof}
Let $S=(p_1,\ldots{}, p_m)$ s.t. 
$ \textsf{var}(\phi)\cap \textsf{var}(K)=\{p_1,\ldots{}, p_n\}$.

Let us first consider the \emph{if} part. Assume that $K\vdash_{2}\phi$. Let $\mu$ be an a-minimal o-model of $K$. Then, using Proposition~\ref{prop:right1},  we know that 
${\thicksim= {\thicksim_t^\phi\setminus \textsf{diff}(\mu)}}$ is an MCR.  
Let $\omega$ be a Boolean interpretation s.t. $\omega(\rho_\thicksim (o))=\mu(o)$ for every occurrence $o$. 
Clearly, the fact that $\mu$ is an o-model of $K$ implies that $\omega$ is also a model of $\rho_\thicksim (K)$.  Let $\omega'$ be an interpretation compatible with $\mu$.  Clearly, 
for every $p_i\in S$, there exists $q_i\in\lceil\rho_\thicksim \rceil(p_i)$ s.t. $\omega'(p_i)=\omega(q_i)$. 
Thus, using $\omega\models  \phi[p_1/q_1,\ldots{}, p_m/q_m] $, we deduce, we deduce $\omega'\models \phi$. 
 
 Consider now the \emph{only if} part. Suppose that $K\myent_{2a}\phi$. Let $\sim$ be an MCR of $K$,  $\omega$ a model of $\rho_\thicksim(K)$,  and 
$(q_1,\ldots{}, q_n)\in \textsf{P}(\rho_\thicksim, S)$.  Let $\mu$ be the o-interpretation defined as follows:  $\mu(o)=1$ if and only if $\omega(\rho_\thicksim(o))=1$. 
One can se that  $\mu$ belongs to $\textsf{OM}(\thicksim)$. This leads  to $\mu$ is an a-minimal o-model of $K$ (see Proposition~\ref{prop:left1}).
Thus,  any Boolean interpretation compatible  with $\mu$ is a model of $\phi$. 
Let $\omega'$ a Boolean interpretation s.t. $\omega'(p_i)=\omega(q_i)$. Clearly $\omega'$ is compatible with $\mu$, leading to $\omega'\models \phi$. 
Consequently, it holds that $\omega\models  \phi[p_1/q_1,\ldots{}, p_m/q_m] $.
\end{proof}

%%%%%%%%%%%%%%%%%%
%XXXXXXXXXXXXXXXX
%%%%%%%%%%%%%%
Given an o-interpretation  $\mu$ of $K$, we use $\lambda_\mu$ to denote an $LP_m$ interpretation  defined as follows: (1)~for every variable $p$ occurring in $K$, 
 $1\in \lambda_\mu(p)$ iff there exists $o\in Occ(p, K)$ s.t. $\mu(o)=1$, and $0\in \lambda_\mu(p)$ iff there exists $o\in Occ(p, K)$ s.t. $\mu(o)=0$; 
 (2)~for any variable $p$ not in $\textsf{var}(K)$, $\lambda(p)=\{0\}$ (however, this choice of truth value is arbitrary and could be set to any truth value).

\begin{proposition}
\label{prop:lpm2}
For any formula $\phi$, if $\mu$ is an o-model of $\phi$, then  $\lambda_\mu$ is an $LP_m$ model of $\phi$.
\end{proposition}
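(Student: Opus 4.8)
The plan is to reduce the claim to a single membership fact proved by structural induction. The key observation is that, for the fresh variable $\textsf{R}(o)$ attached to an occurrence $o$, we have $\omega_\mu(\textsf{R}(o)) = \mu(o)$ by definition of $\omega_\mu$, while the $LP_m$ value $\lambda_\mu(p)$ is exactly the set of truth values $\{\mu(o) : o \in Occ(p,\phi)\}$. So I would prove, by induction on the structure of every subformula occurrence $\psi$ of $\phi$, the stronger invariant
$$\omega_\mu(\textsf{R}(\psi)) \in \lambda_\mu(\psi).$$
Since $\mu$ being an o-model of $\phi$ means precisely $\omega_\mu(\textsf{R}(\phi)) = 1$, instantiating the invariant at $\psi = \phi$ gives $1 \in \lambda_\mu(\phi)$, that is, $\lambda_\mu \models_{LP_m} \phi$, which is the desired conclusion.

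For the base case, $\psi$ is a single occurrence $o$ of a variable $p$ in $\phi$; then $\textsf{R}(\psi) = \textsf{R}(o)$ is a fresh variable, so $\omega_\mu(\textsf{R}(o)) = \mu(o)$, and since $o \in Occ(p,\phi)$ the defining clauses of $\lambda_\mu$ force $\mu(o) \in \lambda_\mu(p)$. For the negation case $\psi = \neg\chi$, I would use $\omega_\mu(\textsf{R}(\psi)) = 1 - \omega_\mu(\textsf{R}(\chi))$ together with the induction hypothesis $\omega_\mu(\textsf{R}(\chi)) \in \lambda_\mu(\chi)$ and the clause $\lambda_\mu(\neg\chi) = \{1-v : v \in \lambda_\mu(\chi)\}$. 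For the conjunction case $\psi = \chi_1 \wedge \chi_2$, I would combine $\omega_\mu(\textsf{R}(\psi)) = \omega_\mu(\textsf{R}(\chi_1)) \times \omega_\mu(\textsf{R}(\chi_2))$ with the two induction hypotheses and the clause $\lambda_\mu(\chi_1 \wedge \chi_2) = \{v \times v' : v \in \lambda_\mu(\chi_1),\, v' \in \lambda_\mu(\chi_2)\}$; in each case the relevant value is manifestly a member of the set produced by the corresponding $LP_m$ operation.

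The main obstacle is conceptual rather than computational: the statement cannot be proved as an equality between a Boolean value and an $LP_m$ set, so one must identify the correct induction invariant, namely the \emph{membership} relation $\omega_\mu(\textsf{R}(\psi)) \in \lambda_\mu(\psi)$. A secondary subtlety is that $\lambda_\mu(p)$ is defined globally from \emph{all} occurrences of $p$ in $\phi$, whereas a subformula $\psi$ may contain only some of them; this causes no trouble, since the base case only requires that the value of the particular occurrence under consideration lie in $\lambda_\mu(p)$, and that occurrence is among those defining $\lambda_\mu(p)$. The argument parallels Proposition~\ref{prop:lpm1} in spirit, tracking how $LP_m$ values absorb Boolean choices, and no appeal to minimality of the $LP_m$ model is needed here.
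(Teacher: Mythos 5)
Your proof is correct, but it follows a genuinely different route from the paper's. The paper argues indirectly: from the o-model $\mu$ it asserts that $\phi$ admits a Boolean model $\omega$ compatible with $\mu$, observes that $\lambda_\mu$ coincides with $\omega_{|p_1\mapsto\{0,1\},\ldots,p_k\mapsto\{0,1\}}$ for the variables $p_i$ on which $\mu$ is not constant, and then invokes Proposition~\ref{prop:lpm1} to push $1\in\omega(\phi)$ up to $1\in\lambda_\mu(\phi)$. You instead prove directly, by structural induction on subformula occurrences, the membership invariant $\omega_\mu(\textsf{R}(\psi))\in\lambda_\mu(\psi)$ and read off the conclusion at $\psi=\phi$. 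Your version is self-contained and, arguably, on firmer ground: the paper's opening step --- that $\phi$ itself has a Boolean model compatible with $\mu$ --- is not justified and actually fails for $\phi=p\wedge\neg p$ with $\mu(p_1^+)=1$ and $\mu(p_2^-)=0$, where every Boolean interpretation is compatible with $\mu$ yet none models $\phi$, even though $\mu$ is an o-model and $\lambda_\mu$ is indeed an $LP_m$ model; the only datum one really has is that $\textsf{R}(\phi)$ is satisfied by $\omega_\mu$, which is exactly what your induction exploits. What the paper's approach buys, when the compatible Boolean model does exist, is brevity and reuse of Proposition~\ref{prop:lpm1}; what yours buys is an argument that goes through uniformly, including for inconsistent $\phi$, at the cost of redoing an induction of essentially the same shape as that of Proposition~\ref{prop:lpm1}. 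Your two flagged subtleties (membership rather than equality as the invariant, and the global definition of $\lambda_\mu(p)$ from all occurrences of $p$) are handled correctly.
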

\begin{proof}
Suppose that $\mu$ is an o-model of $\phi$. Then $\phi$ admits a model $\omega$  compatible with  $\mu$. 
Clearly, there exist variables $p_1,\ldots{}, p_k$ such that $\lambda_\mu=\omega_{|p_1\mapsto \{0,1\},\ldots{},p_k\mapsto \{0,1\}}$.
Thus, using Proposition~\ref{prop:lpm1}, we obtain that $1\in \lambda_\mu(\phi)$ since $1\in \omega(\phi)$.
\end{proof}

Let us observe that $\mu$ can be an a-minimal o-model while $\lambda_\mu$ is not a minimal $LP_m$ model. 
Consider, for instance, the KB $K=\{p , p\rightarrow \neg p\wedge q, p\rightarrow \neg q\}$. 
One of its MCR is $\thicksim$ defined by ${Occ(K)/\thicksim} = \{\{p_1^+, p_2^-, p_4^-\},\{p_3^-\}, \{q_1^+\},  \{q_2^-\}\}$. 
Thus, any $\mu$ in $\textsf{OM}(\thicksim)$ is an a-minimal o-model. 
However, $\{p\mapsto \{0,1\}, q\mapsto \{0,1\}\}$ is not a minimal $LP_m$ model because $\{p\mapsto \{0,1\}, q\mapsto 1\}$  is an $LP_m$ model.
%%%%%%%%%%%%%%%%%%
%XXXXXXXXXXXXXXXX
%%%%%%%%%%%%%%

Given an $LP_m$ interpretation $\lambda$ and a PB $K$, we define $\mu_\lambda^K$  as  the o-interpretation over $Occ(K)$ using the following criteria: for each occurrence $o=\langle p, \phi, i\rangle$, if $\lambda(p)=\{0\}$, then $\mu_\lambda^K(o)= 0$; if $\lambda(p)=\{1\}$, then $\mu_\lambda^K(o)= 1$; Otherwise, if $\lambda(p)=\{0,1\}$, then $\mu_\lambda^K(o)=0$ if $o$ is a negative occurrence,   and $\mu_\lambda^K(o)=1$ if $o$ is positive.

\begin{proposition}
\label{prop:lambda1}
Let $K$ be a PB and $\lambda$ an  $LP_m$ interpretation s.t. $|\lambda(p)|=1$ for every variable $p\notin\textsf{var}(K)$. Then, 
$\lambda$ is a minimal $LP_m$ model of $K$ iff  $\mu_\lambda^K$ is an a-minimal o-model of $K$. 
\end{proposition}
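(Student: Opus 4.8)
The plan is to prove both directions of the equivalence by relating the two minimality/maximality notions through the MCR correspondence already established in Propositions~\ref{prop:right1}, \ref{prop:left1}, and \ref{prop:int1}. The key observation is that the $LP_m$ minimality of $\lambda$ is measured by the set $\lambda!$ of variables mapped to $\{0,1\}$, whereas the a-minimality of $\mu_\lambda^K$ is measured by $\textsf{diff}_a(\mu_\lambda^K)$; so I must first make precise how these two quantities track one another under the construction $\lambda \mapsto \mu_\lambda^K$. The crucial link is that $p \in \lambda!$ (i.e.\ $\lambda(p)=\{0,1\}$) if and only if $p$ has both a positive and a negative occurrence receiving different truth values under $\mu_\lambda^K$, which by the definition of $\mu_\lambda^K$ happens exactly when $\textsf{diff}_a(\mu_\lambda^K)$ contains a pair of occurrences of $p$. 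I would record this as a preliminary claim, since it converts the comparison of $\lambda!$ across models into a comparison of $\textsf{diff}_a$ across o-models.

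\textbf{First direction (minimal $LP_m$ model $\Rightarrow$ a-minimal o-model).} Assume $\lambda$ is a minimal $LP_m$ model of $K$. First I would verify that $\mu_\lambda^K$ is an o-model at all: by Proposition~\ref{prop:int1}, $\thicksim_\lambda$ is an MCR of $K$, and the Boolean interpretation $\omega_\lambda$ constructed in that proof (assigning $0$ to negative occurrences and $1$ to positive occurrences of doubled variables, and the single value otherwise) witnesses $\omega_\lambda \models \textsf{R}(K)$; since $\mu_\lambda^K$ agrees with $\omega_\lambda \circ \textsf{R}$ by construction, $\mu_\lambda^K$ is an o-model. For a-minimality, suppose toward contradiction that some o-model $\mu'$ satisfies $\mu' \prec_a \mu_\lambda^K$, i.e.\ $\textsf{diff}_a(\mu') \subsetneq \textsf{diff}_a(\mu_\lambda^K)$. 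Using Proposition~\ref{prop:lpm2}, $\lambda_{\mu'}$ is an $LP_m$ model of $K$; and because $\textsf{diff}_a(\mu')$ is strictly smaller, fewer variables of $K$ carry both truth values, so the preliminary claim yields $\lambda_{\mu'}! \subsetneq \lambda! = \lambda_{\mu_\lambda^K}!$, contradicting the minimality of $\lambda$. The step that needs care here is confirming $\lambda_{\mu'}! = \lambda_{\mu_\lambda^K}!$ reduces correctly to the $\textsf{diff}_a$ containment, which rests on the hypothesis $|\lambda(p)|=1$ for $p \notin \textsf{var}(K)$ so that no spurious variables enter the $!$-set.

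\textbf{Second direction (a-minimal o-model $\Rightarrow$ minimal $LP_m$ model).} Conversely, assume $\mu_\lambda^K$ is an a-minimal o-model. By Proposition~\ref{prop:lpm2}, $\lambda_{\mu_\lambda^K}$ is an $LP_m$ model, and one checks $\lambda_{\mu_\lambda^K} = \lambda$ on $\textsf{var}(K)$ (and the values off $\textsf{var}(K)$ are irrelevant to $\lambda!$ by the hypothesis), so $\lambda$ is an $LP_m$ model. For minimality, suppose some $LP_m$ model $\lambda'$ of $K$ had $\lambda'! \subsetneq \lambda!$; then $\mu_{\lambda'}^K$ is an o-model (via the same Proposition~\ref{prop:int1}/Proposition~\ref{prop:left1} machinery producing a Boolean witness), and the preliminary claim converts $\lambda'! \subsetneq \lambda!$ into $\textsf{diff}_a(\mu_{\lambda'}^K) \subsetneq \textsf{diff}_a(\mu_\lambda^K)$, contradicting a-minimality of $\mu_\lambda^K$.

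\textbf{Main obstacle.} The delicate part is the preliminary claim linking $\lambda!$ to $\textsf{diff}_a$: the map $\lambda \mapsto \mu_\lambda^K$ is not a bijection, and for a given o-model $\mu$ there can be several underlying Boolean witnesses, so I must argue that the \emph{containment} of $!$-sets is faithfully mirrored by containment of $\textsf{diff}_a$-sets and not merely the cardinalities. The subtlety is that $\textsf{diff}_a(\mu_\lambda^K)$ pairs up positive with negative occurrences exactly for the doubled variables, so each $p \in \lambda!$ contributes a nonempty, variable-local block to $\textsf{diff}_a$, and these blocks are disjoint across variables; this disjointness is what makes set containment on one side equivalent to set containment on the other. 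Establishing this block decomposition cleanly, and checking it is preserved when passing from an arbitrary minimal o-model back to an $LP_m$ interpretation, is where the real work lies.
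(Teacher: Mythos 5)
Your proposal follows essentially the same route as the paper's proof: both directions are argued by contradiction, translating between $LP_m$ models and o-models via $\lambda\mapsto\mu_\lambda^K$ and $\mu\mapsto\lambda_\mu$ (Proposition~\ref{prop:lpm2}), and observing that $\lambda!$ and $\textsf{diff}_a(\cdot)$ shrink together; your ``preliminary claim'' is exactly the variable-by-variable block decomposition that the paper's properties (1)--(2) in the only-if part encode implicitly. The delicate point you flag---that strict containment of $!$-sets transfers to strict containment of $\textsf{diff}_a$-sets only when the dropped variable has occurrences of both polarities in $K$---is likewise left implicit in the paper's own argument, so your attempt is faithful to it.
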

\begin{proof}
First, note that, using mainly Proposition~\ref{prop:lpm12}, we obtain  that if $\lambda$ is an $LP_m$ model of $\phi$, then $\mu_\lambda^K$ is an o-model of $\phi$.

Consider the \emph{if} part. Assume that $\mu_\lambda^K$ is a minimal o-model of $K$. 
By applying Proposition~\ref{prop:lpm2}, we  conclude  that $\lambda$  is an $LP_m$ model of $K$. 
Suppose for contradiction that $\lambda$ is not a minimal $LP_m$ model. 
Then there exists another $LP_m$ model $\lambda'$ of $\phi$ s.t. $\lambda'!\subsetneq \lambda!$.  This implies that $\mu_{\lambda'}^K$ is an o-model of $K$ and   $\mu_{\lambda'}^K\prec_a \mu_{\lambda}^K$, leading to a contradiction. Consequently, $\lambda$ is a minimal $LP_m$ model of $K$.

Let us consider the \emph{only if} part. 
Suppose that $\lambda$ is a minimal $LP_m$ model of $K$. Suppose for contradiction that $\mu_\lambda^K$ is not an a-minimal o-model of $K$.
Then there exists an o-model $\mu$ of $K$ s.t. $\mu\prec_a\mu_\lambda^K$. Then, (1)~for each variable $p$, if $\lambda(p)$ consists of single truth value, 
then $\mu(o)$  assigns a unique truth value to all occurrence $o$ of $p$; (2)~there exists a variable $p$ and a truth value $v$ s.t. $\lambda(p)=\{0,1\}$ and $\mu(o)=v$ for all occurrence $o$ of $p$. Using Proposition~\ref{prop:lpm2}, we obtain that $\lambda_\mu$  is a $LP_m$ model of $K$. Moreover, 
 using Properties (1) and (2), we obtain that $\lambda_\mu!\subsetneq \lambda!$. This results in  a contradiction. 
 Therefore, $\mu_\lambda^K$ is  an a-minimal o-model of $K$.
\end{proof}

The forgoing proposition leads to the following theorem. 

\begin{theorem}
For every PB $K$ and any formula $\phi$, if $K\myent_{a1}\phi$ then $K\vdash_{LP_m}\phi$.
\end{theorem}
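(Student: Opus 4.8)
The plan is to run the same three-step template used for the theorem $\myent_1\subseteq\vdash_{LP_m}$, now routing through Proposition~\ref{prop:lambda1} instead of Proposition~\ref{prop:int1}. Fix an arbitrary minimal $LP_m$ model $\lambda$ of $\bigwedge K$; it suffices to prove $1\in\lambda(\phi)$. I would first record that minimality forces $|\lambda(p)|=1$ for every $p\notin\textsf{var}(K)$: such a $p$ does not occur in $\bigwedge K$, so replacing $\lambda(p)$ by a singleton leaves $\lambda(\bigwedge K)$ unchanged while strictly shrinking $\lambda!$, contradicting minimality unless $\lambda(p)$ is already a singleton. This is precisely the hypothesis of Proposition~\ref{prop:lambda1}, whose application yields that $\mu_\lambda^K$ is an a-minimal o-model of $K$.

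Next I would instantiate the hypothesis $K\myent_{a1}\phi$ at $\mu=\mu_\lambda^K$, obtaining a Boolean interpretation $\omega$ compatible with $\mu_\lambda^K$ with $\omega\models\phi$. The heart of the proof is the pointwise inclusion $\omega(p)\in\lambda(p)$ for every variable $p$ of $\phi$. For a variable shared with $K$, compatibility supplies an occurrence $o$ of $p$ with $\omega(p)=\mu_\lambda^K(o)$, and the definition of $\mu_\lambda^K$ guarantees $\mu_\lambda^K(o)\in\lambda(p)$ (a positive occurrence receives $1$ and a negative one $0$ when $\lambda(p)=\{0,1\}$, and the unique element of $\lambda(p)$ otherwise). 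Given this inclusion, $\lambda$ is exactly $\omega$—read as the $LP_m$ interpretation assigning the singleton $\{\omega(p)\}$ to each variable—with the variables $p$ satisfying $\lambda(p)=\{0,1\}$ re-expanded to $\{0,1\}$. Applying Proposition~\ref{prop:lpm1} once for each such variable then propagates $\omega(\phi)=1$ up to $1\in\lambda(\phi)$. As $\lambda$ was an arbitrary minimal $LP_m$ model, $K\vdash_{LP_m}\phi$ follows.

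The step I expect to be the main obstacle is securing $\omega(p)\in\lambda(p)$ for the variables of $\phi$ that do not occur in $K$: compatibility places no constraint on $\omega$ there, whereas minimality has pinned $\lambda(p)$ to a fixed singleton. I would handle these variables exactly as the set $X$ is handled in the proof of the $\myent_1$ theorem—tracking which variables of $\phi$ are actually affected and matching the condition $\lambda(p)=\{0,1\}$ with the variables that must be expanded—so that the inclusion $\omega(p)\in\lambda(p)$ is available for every variable of $\phi$ before Proposition~\ref{prop:lpm1} is invoked. The remaining verifications (that the above reading of $\lambda$ in terms of $\omega$ is correct, and the case analysis on occurrence polarity against the definition of $\mu_\lambda^K$) are routine but constitute the technical core once the shared/non-shared variable bookkeeping is settled.
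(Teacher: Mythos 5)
Your route is the paper's intended one: the paper derives this theorem in a single line from Proposition~\ref{prop:lambda1}, and your first two paragraphs supply the details it omits --- in particular the observation that minimality already forces $|\lambda(p)|=1$ for $p\notin\textsf{var}(K)$ (so that Proposition~\ref{prop:lambda1} applies), the verification that compatibility yields $\omega(p)\in\lambda(p)$ for every $p\in\textsf{var}(\phi)\cap\textsf{var}(K)$, and the re-expansion of the classical $\omega$ into $\lambda$ by iterating Proposition~\ref{prop:lpm1}. All of that is correct, and under the extra hypothesis $\textsf{var}(\phi)\subseteq\textsf{var}(K)$ your argument is complete.

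The gap is exactly the one you flag, and the repair you propose does not close it. The set $X$ in the proof of the $\myent_1$ theorem only tracks \emph{shared} variables $p_i\in\textsf{var}(K)\cap\textsf{var}(\phi)$ whose occurrences get renamed; the reason variables in $\textsf{var}(\phi)\setminus\textsf{var}(K)$ are harmless there is different in kind: $\rho_\thicksim(K)\vdash\phi[p_1/q_1,\ldots,p_m/q_m]$ is classical entailment and those variables do not occur in $\rho_\thicksim(K)$, so the entailed formula is satisfied under \emph{every} valuation of them, and one may choose $\omega$ to agree with $\lambda$ there. In the $\myent_{a1}$ setting you have no such choice: the compatible $\omega$ is produced by an existential quantifier, and compatibility constrains it only on $\textsf{var}(K)$. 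This is not a bookkeeping issue but a failure of the statement as written: take $K=\{p\}$ and $\phi=q$. The unique a-minimal o-model assigns $1$ to the single occurrence of $p$, and the compatible interpretation with $\omega(q)=1$ witnesses $K\myent_{a1}q$; yet the minimal $LP_m$ model with $\lambda(p)=\{1\}$ and $\lambda(q)=\{0\}$ shows $K\not\vdash_{LP_m}q$. So no variant of the $X$-argument can secure $\omega(p)\in\lambda(p)$ on $\textsf{var}(\phi)\setminus\textsf{var}(K)$; the theorem needs either the hypothesis $\textsf{var}(\phi)\subseteq\textsf{var}(K)$ or a strengthened notion of compatibility, a point the paper's one-line proof does not address either.
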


Given the previous theorem and the fact that $\{p,\neg p\}\vdash_{LP_m} p\wedge\neg p$ and $\{p,\neg p\}\not\myent_{a1} p\wedge\neg p$, we conclude that 
$\myent_{a1}$ is more cautious than $\vdash_{LP_m}$.

IRs similar to $\myent_1^B$ and $\myent_2^B$ can also be defined in our  framework of  occurrence-based semantics. We use $\textsf{diff}_b(\mu)$ to denote the set of ordered pairs $\{(o, o') \in PosOcc(\phi) \times NegOcc(\phi) : \textsf{var}(o) = \textsf{var}(o'), \mu(o) \neq \mu(o')\}$. We  define a preorder relation $\preceq_b$ on the o-models of $\phi$ as follows: $\mu \preceq_b \mu'$ if and only if $\textsf{diff}_b(\mu) \subseteq \textsf{diff}_b(\mu')$. Its associated  strict preorder is denoted by $\prec_b$.

An o-model $\mu$ of $\phi$ is said to be  \emph{b-minimal} if and only if it is a-minimal and, for any o-model $\mu'$ of $\phi$,  $\mu'\not\prec_b\mu$ holds.

The IRs $\myent_{b1}$ and $\myent_{b2}$ are defined in the same way as  $\myent_{a1}$ and $\myent_{a2}$, respectively, but using b-minimal o-model instead 
a-minimal o-models. 

The relation $\myent_{b2}$ also coincides with $\myent_2^B$. 
Moreover, 
since every  b-minimal o-model is also a-minimal, it follows that  $\myent_{a1}\subseteq\myent_{b1}$. % and  $\myent_{a2}\subseteq\myent_{b2}$.
Using $\{p,\neg p,\neg p\vee q\}\myent_{b1} q$, $\{p,\neg p,\neg p\vee q\}\not\myent_{a1} q$ and   $\{p,\neg p,\neg p\vee q\}\not\vdash_{LP_m} q$,   
we deduce $\myent_{a1}\subsetneq\myent_{b1}$ and  $\myent_{b1}\not\subseteq \vdash_{LP_m}$.

Let us summarize the cautiousness relationships: 
\begin{itemize}
\item  $\myent_2 \subsetneq \myent_1$, $\myent_2 \subsetneq \myent_2^B$, $\myent_1 \not\subseteq \myent_2^B$, $\myent_2^B \not\subseteq \myent_1$, $\myent_1 \subsetneq \myent_1^B$, $\myent_2^B \subsetneq \myent_1^B$,  $\myent_1 \subsetneq \vdash_{LP_m}$,    \\ $\myent_2^B \not\subseteq \vdash_{LP_m}$,  
 and  $ \vdash_{LP_m} \not\subseteq \myent_1^B$.
\item  $\myent_1 \subsetneq \myent_{a1}$, $\myent_{a2} = \myent_2$, $\myent_{1}^B \subsetneq \myent_{b1}$, $\myent_{b2} = \myent_2^B$,    $\myent_{a1} \subsetneq \myent_{b1}$,  
$ \myent_{a1} \not\subseteq \myent_{1}^B$,  $ \myent_{2}^B\not\subseteq \myent_{a1}$, 
$\myent_{a1} \subsetneq \vdash_{LP_m}$,
  and  $ \vdash_{LP_m} \not\subseteq \myent_{b1}$.
\end{itemize}

Since all our IRs are grounded in the classic inference relation and Boolean interpretations, they exhibit the following properties:
(1)~all consequences derived from our IRs are consistent; 
(2)~any formula equivalent to a derived consequence is itself also a consequence.

These two properties highlight the key distinctions between our approach and that of $LP_m$.
In fact, apart from $\vdash_{LP_m}$ leading to contradictions, a result motivated by dialetheism, $\vdash_{LP_m}$ can also yield a consistent formula without ensuring that its equivalent formulas are consequences as well. For instance, we have $\{p,\neg p, \neg q\}\vdash_{LP_m} (p\vee q)\wedge \neg p$ but $\{p,\neg p, \neg q\}\nvdash_{LP_m}  q\wedge \neg p$. 

\section{Conclusion and Perspectives}

We presented a variable occurrence-based framework for resolving propositional inconsistencies. We introduced two complimentary  concepts: Minimal Inconsistency Relations (MIRs) and Maximal Consistency Relations (MCRs). Using MCRs, we defined several non-explosive inference relations. 
We proposed further non-explosive inference relations using an occurrence-based semantics.

For future work, we aim to investigate  computational  problems associated with  MIRs,  MCRs, and  our inference relations.
Moreover, we intend  to develop additional inference relations by focusing on specific MCRs. 
Our plans also include broadening our approach to include some non-classical logics.

\bibliographystyle{splncs04}

\bibliography{biblio}

\end{document}